\theoremstyle{plain} 
\newtheorem{theorem}{Theorem}
\title{Predicting Through Generation: Why Generation Is Better for Prediction}
\author{
\textbf{Md Kowsher}\textsuperscript{1},
\textbf{Nusrat Jahan Prottasha}\textsuperscript{1},
\textbf{Prakash Bhat}\textsuperscript{2}, 
\textbf{Chun-Nam Yu}\textsuperscript{3}, \\
\textbf{Mojtaba Soltanalian}\textsuperscript{4},
\textbf{Ivan Garibay}\textsuperscript{1},
\textbf{Ozlem Garibay}\textsuperscript{1},
\textbf{Chen Chen}\textsuperscript{1},
\textbf{Niloofar Yousefi}\textsuperscript{1}, \\
\textsuperscript{1}University of Central Florida, USA
\textsuperscript{2}DotStar Inc, USA \\
\textsuperscript{3}Nokia Bell Labs, USA
\textsuperscript{4}University of Illinois Chicago, USA\\
\faGithub~\href{https://github.com/Kowsher/PredGen}{\textcolor{red}{\texttt{github.com/Kowsher/PredGen}}}
}
\begin{document}
\maketitle
\begin{abstract}
This paper argues that generating output tokens is more effective than using pooled representations for prediction tasks because token-level generation retains more mutual information. Since LLMs are trained on massive text corpora using next-token prediction, generation aligns naturally with their learned behavior. Using the \textit{Data Processing Inequality (DPI)}, we provide both theoretical and empirical evidence supporting this claim. However, autoregressive models face two key challenges when used for prediction: (1) \textit{exposure bias}, where the model sees ground-truth tokens during training but relies on its own predictions during inference, leading to errors, and (2) \textit{format mismatch}, where discrete tokens do not always align with the task’s required output structure. To address these challenges, we introduce \textbf{PredGen (Predicting Through Generating)}, an end-to-end framework that (i) uses \textit{scheduled sampling} to reduce exposure bias, and (ii) introduces a \textit{task adapter} to convert the generated tokens into structured outputs. Additionally, we introduce \textit{Writer-Director Alignment Loss (WDAL)}, which ensures consistency between token generation and final task predictions, improving both text coherence and numerical accuracy. We evaluate \textbf{PredGen} on multiple classification and regression benchmarks. Our results show that \textbf{PredGen} consistently outperforms standard baselines, demonstrating its effectiveness in structured prediction tasks.
\end{abstract}

\begin{figure*}[!t]

 \begin{center}
      \includegraphics[width=1.00\linewidth]{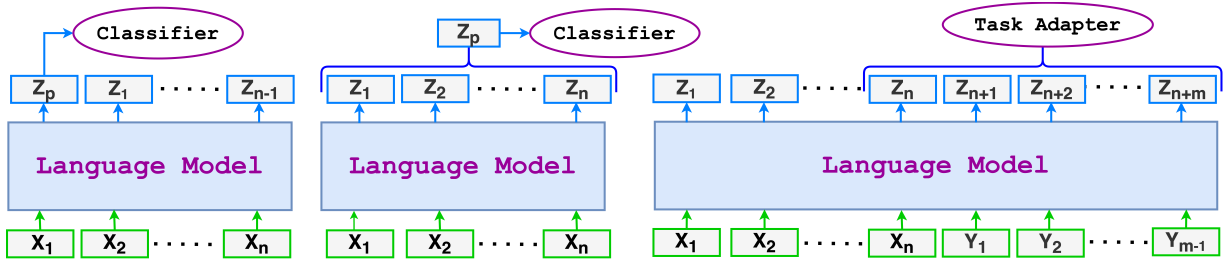}

   \end{center}
\caption{Comparison of different prediction methods using a language model. 
(Left) The traditional approach where a pooled representation \(\mathbf{Z_p}\) is passed to a classifier for prediction. 
(Middle) A similar method where \(\mathbf{Z_p}\) is extracted from the hidden states and used for classification. 
(Right) The generative approach, where the model generates additional tokens \(\mathbf{Y_1}, \mathbf{Y_2}, ..., \mathbf{Y_{m}}\), and their hidden states are processed by a task adapter for prediction. This method retains more task-relevant information by using token-level generation.}

\label{fig:method_cmp}
\end{figure*}

\section{Introduction}

Large Language Models (LLMs) have significantly advanced natural language processing (NLP), demonstrating strong performance across various tasks such as text completion \cite{kenton2019bert}, machine translation \cite{vaswani2017attention, wu2016google}, summarization \cite{lewis2019bart, zhang2020pegasus}, and question answering \cite{rajpurkar2016squad, yang2019xlnet}. By training on massive text corpora, these models learn contextual embeddings that capture rich semantic information, enabling them to generalize across a wide range of applications \cite{bommasani2021opportunities}. Beyond traditional NLP tasks, LLMs are increasingly used for predictive tasks, such as classification \cite{liu2019roberta}, regression \cite{raffel2020exploring}, and reasoning \cite{wei2022chain}, where they map input sequences to structured outputs.

A key strength of LLMs is their ability to perform tasks in a \emph{zero-shot} or \emph{few-shot} setting \cite{brown2020language}. By conditioning on a few examples or carefully crafted prompts, LLMs can generalize to new tasks without explicit fine-tuning \cite{kojima2022large}. However, while prompting is flexible, it lacks precision, particularly for tasks requiring structured outputs, such as numerical reasoning \cite{lewkowycz2022solving}. To improve accuracy, fine-tuning is often performed by training a prediction head on pooled representations (e.g., [CLS] tokens or mean-pooled embeddings) \cite{kenton2019bert} (Figure~\ref{fig:method_cmp}-Left \& Middle). However, pooling discards positional and sequential information, limiting the model’s ability to capture fine-grained dependencies \cite{huang2024scalingnote, oh2022don}.

We argue that generation-based training, where an LLM is fine-tuned to produce task outputs as token sequences, preserves richer information than classification-based approaches  (Figure~\ref{fig:method_cmp}-Right). Since LLMs are originally trained using next-token prediction on large corpora, generation aligns naturally with their learning paradigm. Switching to pooled classification may not fully use their pre-training knowledge, leading to weaker transfer learning. Our experiments show that token-level generation retains more mutual information than traditional prediction methods. Using the Data Processing Inequality (DPI)\cite{beaudry2011intuitive}, we theoretically prove that generating tokens preserves strictly more information with the target output than pooling-based representations, preventing irreversible information loss.

Despite the benefits of generation, two key challenges arise. First, many tasks require structured outputs, but generative models produce discrete tokens. For example, in a Semantic Textual Similarity (STS-B) task, the model must output a similarity score like 0.75. Representing this as separate tokens \([0, ., 7, 5]\) can introduce ambiguities—generating "0.76" or "1.75" may lead to similar token-level losses, even though 0.76 is numerically much closer to 0.75 than 1.75. This makes fine-grained numerical learning difficult. Second, exposure bias occurs because, during training, the model always conditions on ground-truth tokens, but during inference, it must rely on its own previously generated outputs. Small errors in the early steps can accumulate, leading to compounding inaccuracies.

To address these issues, we introduce  PredGen (Predicting Through Generating), an end-to-end framework that fine-tunes LLMs for supervised prediction tasks. PredGen treats the target output as a sequence of tokens and incorporates scheduled sampling \cite{bengio2015scheduled} to mitigate exposure bias. Additionally, a task adapter transforms discrete token outputs into structured predictions, ensuring both numerical precision and task-specific formatting.

Furthermore, we introduce a specialized loss function, termed \textit{Writer-Director Alignment Loss} (WDAL), designed to align token generation with task-specific predictions. In this framework, the \textit{writer} generates output tokens (analogous to drafting a film script), while the \textit{director} transforms these tokens into the required task format (comparable to producing a film from a script). WDAL ensures that the generated sequence maintains both textual coherence and numerical accuracy by effectively balancing token-level generation with task-specific objectives.

We evaluate PredGen on multiple regression and classification benchmarks, covering both numerical reasoning tasks (e.g., mathematical problem solving, similarity scoring) and high-level reasoning tasks (e.g., commonsense understanding). PredGen consistently outperforms baselines that use pooled representations or a standard generative approach, demonstrating its effectiveness across a wide range of structured prediction tasks. 

\textbf{Our contributions} are summarized as follows:
\begin{itemize}
    \item  We argue that generation is superior to traditional classifier-based prediction and provide both theoretical and empirical evidence to support this claim.
    \item  We introduce \textbf{PredGen}, a framework designed to address the key challenges in generative prediction.
    \item We propose a novel loss function, WDAL, that aligns token generation with final task predictions to ensure consistency between the generated sequences and structured outputs.
\end{itemize}

\section{Problem Formulation}

\paragraph{Prediction Using Language Models.}
Let \(\mathbf{X} = [\mathbf{X}_1, \mathbf{X}_2, \dots, \mathbf{X}_n]\) be an input sequence, and suppose we wish to predict a structured output \(\mathbf{P}\). For instance, if \(\mathbf{P} = 13.4\), we may represent it as a discrete token sequence \(\mathbf{Y} = [\text{'1'}, \text{'3'}, \text{'.'}, \text{'4'}]\). A pre-trained language model \(\mathcal{M}\) encodes \(\mathbf{X}\) into hidden states 
\[
\mathbf{Z} = [\mathbf{Z}_1, \mathbf{Z}_2, \dots, \mathbf{Z}_n],
\]
where \(\mathbf{Z}_i \in \mathbb{R}^d \) is the contextual embedding for token \(x_i\) with $d$ dimension. In standard prediction settings, we often \emph{pool} these hidden states into a single vector representation \(\mathbf{Z}_p  \in \mathbb{R}^d \). A classifier function \(f_{\mathrm{cls}}\bigl(\mathbf{Z}_p\bigr)\) then transforms this pooled representation into the final prediction \(\hat{\mathbf{P}}\), typically returning probabilities (for classification) or a real-valued score (for regression).

\paragraph{Reformulating Prediction as Token Generation:}  
We redefine classification as a sequence generation task. Given an input sequence \( \mathbf{X} \) and target sequence \( \mathbf{Y} \), the model is trained to generate \( \mathbf{Y} \) autoregressively, predicting one token at a time. The probability distribution of the target sequence is given by:
\[
P(\mathbf{Y} | \mathbf{X}; \theta) = \prod_{t=1}^m P(Y_t | \mathbf{X}, \mathbf{Y}_1, \dots, \mathbf{Y}_{t-1}; \theta),
\]
where \( \theta \) represents the learnable parameters of the model. This formulation allows the model to generate structured outputs while preserving sequential dependencies in the data.

\section{Why Generation is More Effective for Prediction}
\label{sec:generation-superior}

One main reason to treat prediction as a token-by-token generation process is that LLMs are originally trained on massive text corpora, which gives them strong generative skills. By generating each output token step by step, the model uses all the contextual clues it learned during pre-training and preserves more details that matter for the task. This property also explains why LLMs often succeed in zero-shot or few-shot scenarios.

In contrast, when we fine-tune an LLM by pooling all hidden states into a single representation, we rely on a \emph{deterministic} procedure. According to the Data Processing Inequality and \textbf{Theorem~\ref{thm:generator-better-than-pooling} (Mutual Information Decreases Under Deterministic Compression)}, this inevitably discards important information. Because token-level generation keeps more mutual information at each step, it can produce predictions that are both more precise and more faithful to the original context.

\begin{theorem}[]
\label{thm:generator-better-than-pooling}
Let $ \mathbf{X}$ be an input random variable, and let $ \mathbf{Z} \in \mathcal{Z}$ be the final hidden representation produced by a model (e.g., an LLM).  
Suppose $ \mathbf{Z_p} = g( \mathbf{Z})$ for some deterministic function $g : \mathcal{Z} \to \mathcal{W}$ (e.g., first-token pooling or mean pooling).  
Let $ \mathbf{Y}$ be the target random variable.  
Then the following holds:
\[
    I( \mathbf{Y} \; ; \;  \mathbf{Z}) \;\;\ge\;\; I( \mathbf{Y} \; ; \;  \mathbf{Z_p}).
\]

\end{theorem}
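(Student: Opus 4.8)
The plan is to invoke the Data Processing Inequality directly, after first establishing the Markov chain structure that licenses it. Since $\mathbf{Z_p} = g(\mathbf{Z})$ is a deterministic function of $\mathbf{Z}$, and $\mathbf{Z}$ itself is (a deterministic or stochastic) function of $\mathbf{X}$ while $\mathbf{Y}$ is the target associated with $\mathbf{X}$, the relevant chain is
\[
\mathbf{Y} \;\longrightarrow\; \mathbf{Z} \;\longrightarrow\; \mathbf{Z_p}.
\]
Concretely, I would argue that conditioned on $\mathbf{Z}$, the variable $\mathbf{Z_p}$ is independent of $\mathbf{Y}$: once $\mathbf{Z}$ is known, $\mathbf{Z_p} = g(\mathbf{Z})$ is fully determined and carries no residual dependence on $\mathbf{Y}$. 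This is the only structural fact needed, and it holds for \emph{any} joint distribution of $(\mathbf{X}, \mathbf{Y}, \mathbf{Z})$ as long as $\mathbf{Z_p}$ is obtained from $\mathbf{Z}$ alone.

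Next I would state the DPI in the form we need: if $A \to B \to C$ is a Markov chain, then $I(A;B) \ge I(A;C)$. Applying this with $A = \mathbf{Y}$, $B = \mathbf{Z}$, $C = \mathbf{Z_p}$ yields $I(\mathbf{Y};\mathbf{Z}) \ge I(\mathbf{Y};\mathbf{Z_p})$, which is exactly the claim. For completeness I might include the short information-theoretic derivation: using the chain rule, $I(\mathbf{Y}; \mathbf{Z}, \mathbf{Z_p}) = I(\mathbf{Y}; \mathbf{Z}) + I(\mathbf{Y}; \mathbf{Z_p} \mid \mathbf{Z}) = I(\mathbf{Y}; \mathbf{Z}_p) + I(\mathbf{Y}; \mathbf{Z} \mid \mathbf{Z_p})$, and since conditional mutual information is nonnegative while $I(\mathbf{Y}; \mathbf{Z_p} \mid \mathbf{Z}) = 0$ by the Markov property, we get $I(\mathbf{Y};\mathbf{Z}) = I(\mathbf{Y};\mathbf{Z_p}) + I(\mathbf{Y};\mathbf{Z}\mid\mathbf{Z_p}) \ge I(\mathbf{Y};\mathbf{Z_p})$.

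The main obstacle is not the inequality itself—which is essentially immediate from DPI—but rather being precise about the probabilistic setup so that the Markov chain genuinely holds. In particular, I would want to make explicit that $g$ is applied to $\mathbf{Z}$ \emph{without} any further access to $\mathbf{X}$ or $\mathbf{Y}$, so that $P(\mathbf{Z_p} \mid \mathbf{Z}, \mathbf{Y}) = P(\mathbf{Z_p} \mid \mathbf{Z})$ (indeed a point mass at $g(\mathbf{Z})$). A secondary, more delicate point worth a remark is the conditions under which the inequality is \emph{strict}: equality holds iff $\mathbf{Z}$ and $\mathbf{Y}$ are conditionally independent given $\mathbf{Z_p}$, i.e., iff the pooling step discards no $\mathbf{Y}$-relevant information; generically, when $g$ is non-injective and the discarded coordinates retain target-relevant signal, the inequality is strict, which is the substantive message of the theorem even though the weak inequality is all that is formally stated. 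I would flag this as a remark rather than fold it into the proof, keeping the proof itself a clean two-line application of DPI.
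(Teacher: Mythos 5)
Your proof is correct, and it takes a genuinely different route from the paper's. You give the standard, fully general Data Processing Inequality argument: you establish the Markov chain $\mathbf{Y} \to \mathbf{Z} \to \mathbf{Z_p}$ (immediate because $\mathbf{Z_p}=g(\mathbf{Z})$ is determined by $\mathbf{Z}$ alone, so $I(\mathbf{Y};\mathbf{Z_p}\mid\mathbf{Z})=0$), and then use the chain-rule decomposition $I(\mathbf{Y};\mathbf{Z},\mathbf{Z_p}) = I(\mathbf{Y};\mathbf{Z}) + I(\mathbf{Y};\mathbf{Z_p}\mid\mathbf{Z}) = I(\mathbf{Y};\mathbf{Z_p}) + I(\mathbf{Y};\mathbf{Z}\mid\mathbf{Z_p})$ together with nonnegativity of conditional mutual information. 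The paper instead proves the inequality case by case for the two pooling operations it actually uses: when $\mathbf{Z_p}$ is the first-token representation it writes $\mathbf{Z}=(\mathbf{Z_p},\mathbf{Z}_1,\dots,\mathbf{Z}_{n-1})$ and invokes its auxiliary results that conditioning on more variables cannot increase entropy (its Appendix theorems on conditioning reducing entropy and monotonicity of conditional entropy); when $\mathbf{Z_p}$ is the mean of $\mathbf{Z}$ it writes $p(\mathbf{Y}\mid\mathbf{Z_p})$ as a mixture $\int p(\mathbf{Y}\mid\mathbf{Z})\,p(\mathbf{Z}\mid\mathbf{Z_p})\,d\mathbf{Z}$ and uses concavity of the entropy functional (Jensen's inequality) plus the law of total expectation to get $H(\mathbf{Y}\mid\mathbf{Z_p})\ge H(\mathbf{Y}\mid\mathbf{Z})$. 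Your argument is shorter, avoids any case analysis, and actually matches the theorem statement more tightly, since the theorem is claimed for an arbitrary deterministic $g$ while the paper's proof only treats first-token and mean pooling explicitly; the paper's approach, in exchange, is more concrete and self-contained, effectively re-deriving the DPI for the specific pooling maps via elementary entropy inequalities. Your closing remark about equality holding iff $\mathbf{Y}\perp\mathbf{Z}\mid\mathbf{Z_p}$ is a correct and worthwhile addition that the paper does not state.
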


\textit{Proof Sketch:}  
The core idea behind the theorem relies on the \textit{DPI}, which states that applying a deterministic function to a random variable cannot increase the mutual information between the original variable and another variable. Since \(\mathbf{Z}_p\) is derived from \(\mathbf{Z}\) through a deterministic process, it follows that \(\mathbf{Z}_p\) contains less information than \(\mathbf{Z}\) (or at most, the same amount), meaning that conditioning on \(\mathbf{Z}_p\) cannot reduce uncertainty about \(\mathbf{Y}\) more than conditioning on \(\mathbf{Z}\). This leads to the following relationship between the conditional entropies:
\[
H(\mathbf{Y} \mid \mathbf{Z}) \leq H(\mathbf{Y} \mid \mathbf{Z}_p).
\]
which leads to the following inequality in the mutual information:
\[
I(\mathbf{Y}; \mathbf{Z}_p) \leq I(\mathbf{Y}; \mathbf{Z}),
\]
A detailed proof is provided in Appendix~\ref{app_sec:proof_theorem_1}.

\begin{figure}[!t]
    \centering
    \includegraphics[width=1.0\linewidth]{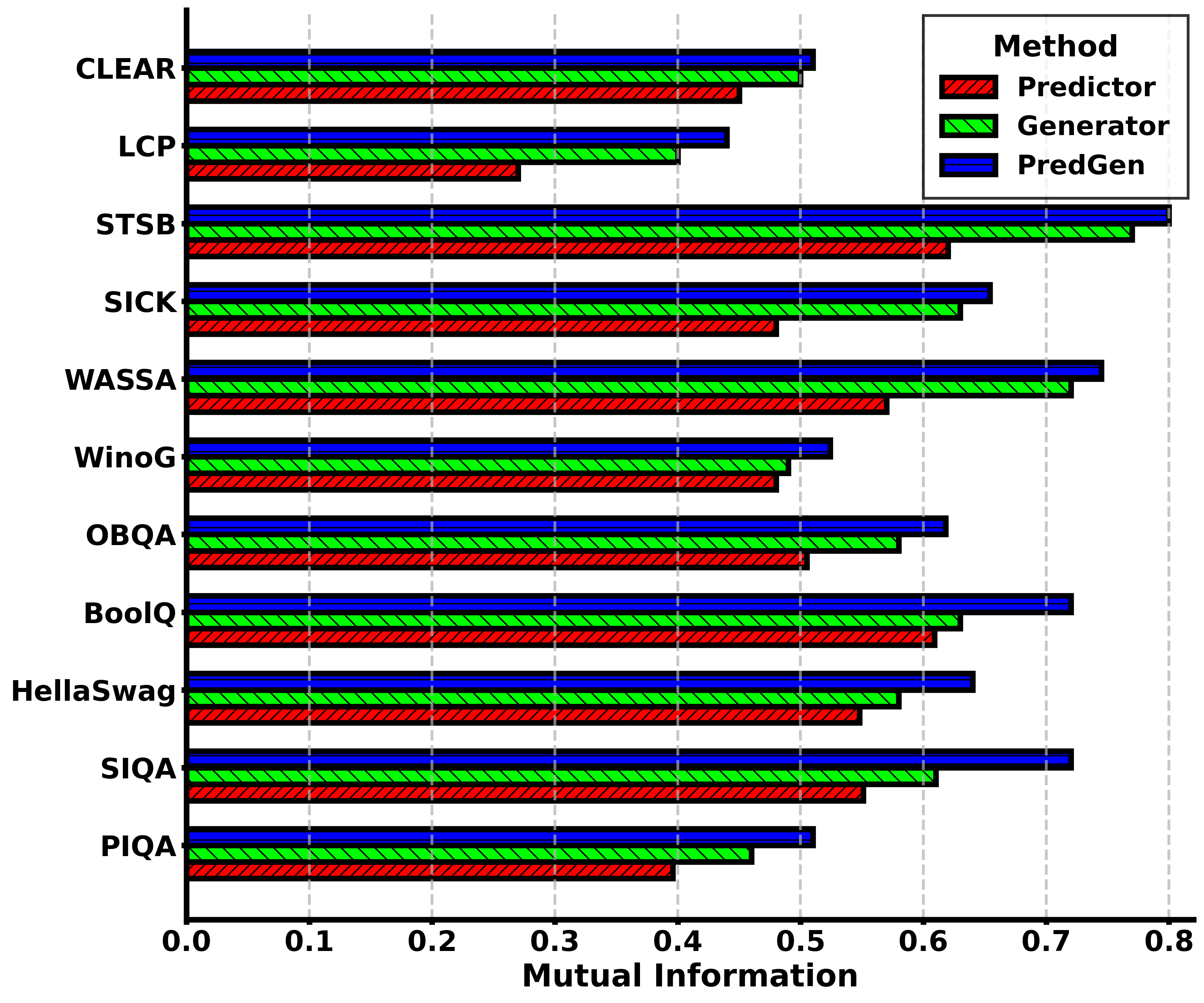}
  
    \caption{Comparison of mutual information estimates for Predictor, Generator, and PredGen across multiple datasets. PredGen consistently retains higher mutual information, supporting the theoretical claim that token-level generation preserves richer task-relevant information than pooled representations.}
\label{fig:MI}
\end{figure}

\begin{figure}[!t]
    \centering
    \includegraphics[width=1.0\linewidth]{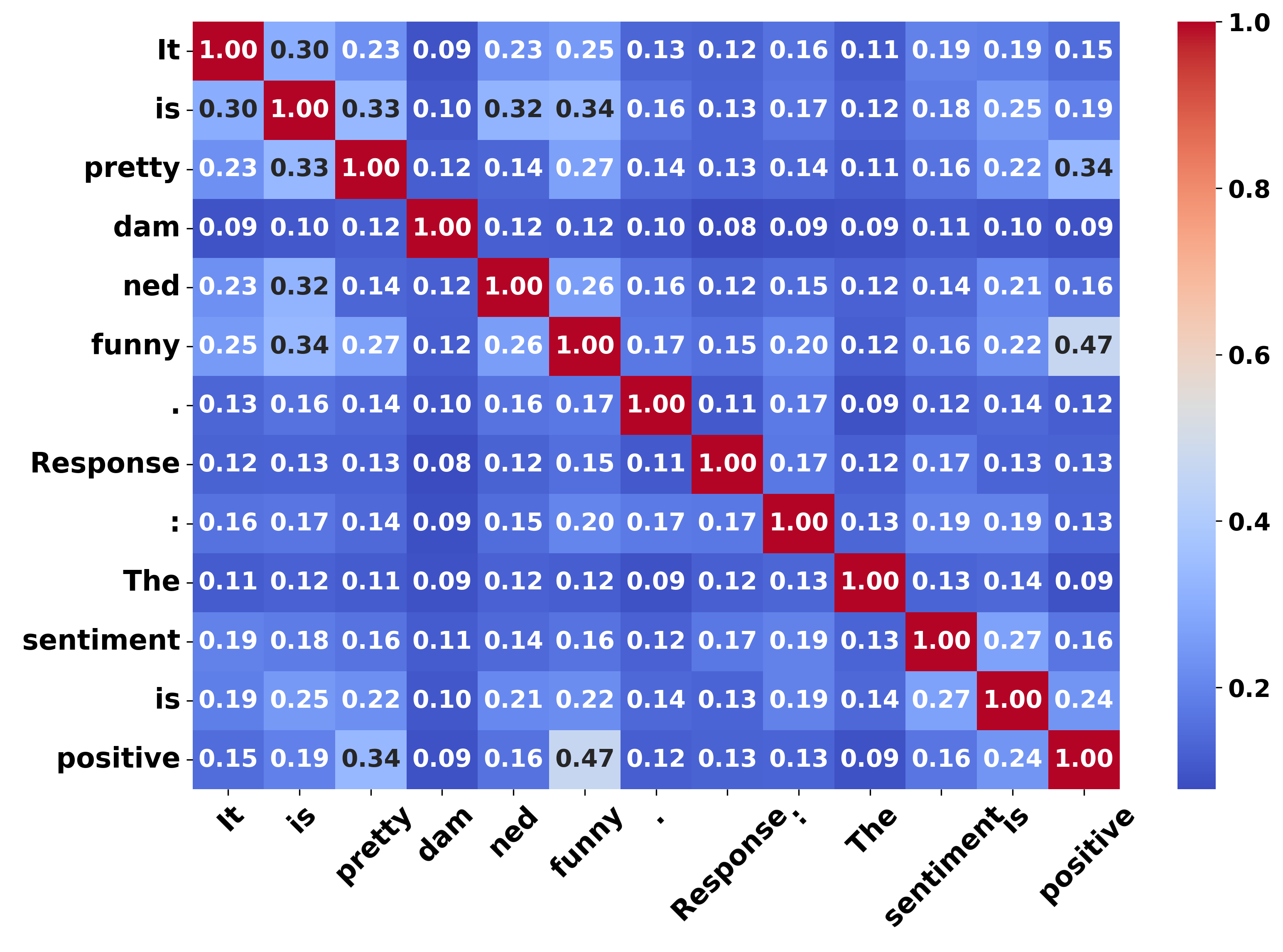}

    \caption{Token-wise mutual information on SST-2 \cite{socher2013recursive}. The predicted token \textit{"positive"} shows high MI with sentiment-related tokens like \textit{"funny"} (0.47) and \textit{"pretty"} (0.34), highlighting strong contextual dependencies.}

\label{fig:MI_token}
\end{figure}

\paragraph{Empirical Evidence:} 
To empirically validate Theorem \ref{thm:generator-better-than-pooling}, we estimate mutual information using the MINE method proposed by \citet{belghazi2018mine}. We train separate models for the Predictor, Generator, and PredGen approaches and extract the pooled representation $\mathbf{Z_p}$ for the Predictor and $\mathbf{Z}$ for the Generator and PredGen across all training and testing data. 

To simplify the computational cost of using all token representations in \(\mathbf{Z}\), we apply Principal Component Analysis (PCA) \cite{mackiewicz1993principal} to reduce the original \(n \times d\) representation down to a \(2 \times d\) space. Let $ \mathbf{Z_r}$ be the reduced representation $\mathbf{Z_r} = \mathrm{PCA}(\mathbf{Z})$.

Next, we estimate mutual information using a two-layer neural network with MINE \cite{belghazi2018mine}, which uses a neural variational method to learn a lower bound on mutual information. When evaluating the \emph{predictor}, we feed \(\mathbf{Z_p}\) as input; when evaluating \emph{generation}, we use the reduced representation \(\mathbf{Z_r}\). The estimation function is:
\begin{multline*}
   \mathcal{I}(\mathbf{Y}; \mathbf{Z}) = \sup_{\theta \in \Theta} \mathbb{E}_{p_{(\mathbf{YZ})}}[T_\theta(\mathbf{Y}, \mathbf{Z})] \\
   - \log \mathbb{E}_{p_{(\mathbf{Y})} p_{(\mathbf{Z})}}[e^{T_\theta(\mathbf{Y}, \mathbf{Z})}] 
\end{multline*}
where $T_\theta$ is a trainable function parameterized by $\theta$. 

Figure \ref{fig:MI} compares mutual information estimates on the test set, showing that PredGen consistently retains more information than both the Predictor and Generator, supporting our theoretical claims.

Additionally, Figure \ref{fig:MI_token} provides a detailed token-wise mutual information analysis. We train the model on the SST-2 dataset and compute MI between token pairs. The predicted sentiment token \textit{"positive"} shows high MI with sentiment-relevant words such as \textit{"funny"} (0.47) and \textit{"pretty"} (0.34), indicating that PredGen effectively captures contextual dependencies. For additional details on mutual information in regression and classification, refer to Appendix Section~\ref{append:MI}. The experimental setup is provided in Section~\ref{sec:exp}.

\section{PredGen}

When using generation to perform prediction, one primary challenge is \emph{exposure bias}. In a typical autoregressive setup, the model is trained to predict the next token based on all previous \emph{ground-truth} tokens. Specifically, at time step \(t\), the model receives \([\mathbf{X}_1, \mathbf{X}_2, \dots, \mathbf{X}_n, \mathbf{Y}_1, \mathbf{Y}_2, \dots, \mathbf{Y}_{t-1}]\) and produces \(y_t\). However, during inference, the model must rely on its own previously \emph{generated} tokens \(\tilde{\mathbf{Y}}_1, \tilde{\mathbf{Y}}_2, \dots, \tilde{\mathbf{Y}}_{t-1}\)\textemdash not the ground-truth sequence. This mismatch means the model never learns to correct its own mistakes, since it always conditions on true tokens during training but must condition on its own (potentially flawed) outputs at test time. Consequently, small errors can accumulate and lead to compounding inaccuracies.

To address exposure bias during autoregressive training, we apply sequence level \emph{scheduled sampling} \cite{bengio2015scheduled}. 
\[
\tilde{\mathbf{Y}} \;=\; 
\begin{cases} 
      \mathbf{Y} & \text{with probability } (1 - p), \\
      \tilde{\mathbf{Y}} & \text{with probability } p,
\end{cases}
\]
where \(\mathbf{Y}\) is the ground-truth tokens and \(\tilde{\mathbf{Y}}_{t-1}\) is the generated predictions' tokens. The parameter \(p\) gradually increases during training, shifting from ground truth to self-conditioning (predictions). This helps the model learn to correct errors arising from its own outputs, thus reducing exposure bias.

Another challenge is that generative models produce \emph{discrete} tokens, whereas certain tasks (e.g., regression) require \emph{continuous} values. To address this, we introduce a transformation step (\emph{Task Adapter}) that maps the generated token sequence into the final prediction form. Concretely, let 
\[
\mathbf{Z}[n:n+m] = [\mathbf{Z}_{n}, \mathbf{Z}_{n+1}, \cdots \mathbf{Z}_{n+m}]
\]

Here, \(\mathbf{Z}[n:n+m]\) denotes the hidden representations for the generated tokens, where \(n\) is the length of the input \(\mathbf{X}\) and \(m\) is the number of generated tokens. Now We define a \emph{task adapter} \(\mathcal{T}\) that transforms \(\tilde{\mathbf{Y}}\) into the desired output:
\[
   \hat{\mathbf{P}} = \mathcal{T}(\mathbf{Z}[n:n+m]).
\]
For example, \(\mathcal{T}\) could convert a sequence of digit tokens into a real-valued number for regression or map generated tokens to a categorical label (e.g., 0, 1, 2, \dots) for classification. This ensures that discrete outputs from the generator can accommodate both continuous and structured predictions.

\paragraph{Writer-Director Alignment Loss (WDAL):}
\label{sec:wdal-main}

In generative prediction, ensuring that the generated token sequence aligns with the final structured output is crucial. The \textit{writer} (generator) is responsible for generating tokens, while the \textit{director} (task adapter) transforms them into the required task format. If these two components are not well-coordinated, errors in generation can propagate to the final prediction, reducing accuracy. To address this issue, we introduce a novel loss function, \textbf{WDAL}, which optimizes both components together to improve prediction quality.

WDAL consists of two primary loss terms: the \textit{writer loss} \(L_W\), which measures the generation error using cross-entropy loss, and the \textit{director loss} \(L_D\), which quantifies the error in the final prediction. A natural way to combine these losses is through their product, \( L_{\text{WDAL}} = L_W \cdot L_D \), ensuring that both the generation and task-specific transformation contribute to the optimization. However, this formulation can lead to numerical instability when the losses differ significantly in scale. To address this, we apply a log-sum-exp trick, leading to the final formulation:
\begin{multline*}\label{eq:wdal_final}
  L_{\text{WDAL}} = \max\!\bigl(L_W^2, L_D^2\bigr)\,\\\exp\!\Bigl(-\Bigl|\log L_W - \log L_D\Bigr|\Bigr).
\end{multline*}
The first term, \(\max(L_W^2, L_D^2)\), serves as an \textit{authority component}, prioritizing the larger loss so that optimization focuses on the component with higher error. The second term, \(\exp\!\bigl(-|\log L_W - \log L_D|\bigr)\), acts as an \textit{alignment penalty}, ensuring that both losses remain balanced. When one loss is significantly higher than the other, the penalty reduces, keeping the overall loss high and encouraging better coordination between the writer and director. A complete derivation of WDAL is provided in Appendix~\ref{sec:wdal-appendix}.

\section{Experiments} \label{sec:exp}
 We compare \textsc{PredGen} with two baselines: the traditional \textbf{Predictor}, which uses pooled hidden representations followed by classification, and the standard \textbf{Generator}, which directly generates output tokens without additional transformations.

To efficiently fine-tune large language models, we employ several PEFT techniques, including \textbf{LoRA} \cite{hu2021lora}, \textbf{AdaLoRA} \cite{zhang2023adalora}, \textbf{RoCoFT} \cite{kowsher2024rocoft}, and \textbf{DoRA} \cite{liu2024dora}. These methods allow us to adapt large models with fewer parameters, reducing computational costs while maintaining strong performance.

For classification, we evaluate on the following datasets: \texttt{BoolQ} \cite{clark2019boolq}, \texttt{PIQA} \cite{bisk2020piqa}, \texttt{SIQA} \cite{sap2019socialiqa}, \texttt{HellaSwag} \cite{zellers2019hellaswag}, \texttt{WinoGrande} \cite{sakaguchi2021winogrande}, \texttt{ARC-e} \cite{clark2018think}, \texttt{ARC-c} \cite{clark2018think}, and \texttt{OBQA} \cite{mihaylov2018can}. The reported metric for classification tasks is accuracy.

For regression, we use the following datasets: \texttt{WASSA} \cite{vinayakumar2017deepcybernet}, \texttt{SICK} \cite{marelli-etal-2014-sick}, \texttt{STSB} \cite{cer2017semeval}, \texttt{LCP} \cite{shardlow2020complex}, \texttt{CLEAR} \cite{crossley2023large}, and \texttt{Humicroedit} \cite{hossain2019president}. The reported metrics for regression tasks are Mean Squared Error (MSE) and Mean Absolute Error (MAE).

Details about the datasets, implementation details, and hyper-parameters  are provided in Appendix \ref{sec:dataset},  \ref{sec:impl}, and Table \ref{tab:hyperparameters} respectively.

\begin{table*}[ht]
\centering
\scalebox{.680}{
\begin{tabular}{l|l|l|c|c|c|c|c|c|c|c|c}
\hline
\rowcolor{gray!20}
\textbf{Model}     & \textbf{PEFT}    & \textbf{Method}    & \textbf{BoolQ} & \textbf{PIQA} & \textbf{SIQA} & \textbf{HellaSwag} & \textbf{WinoGrande} & \textbf{ARC-e} & \textbf{ARC-c} & \textbf{OBQA} & \textbf{Avg.}\\ \hline
\multirow{12}{*}{\textbf{Llama2-7B}} 
                   & \cellcolor[HTML]{EAF3FA}  LoRA             & \cellcolor[HTML]{EAF3FA}Predictor          & \cellcolor[HTML]{EAF3FA}66.29         & \cellcolor[HTML]{EAF3FA}81.11         & \cellcolor[HTML]{EAF3FA}78.95          & \cellcolor[HTML]{EAF3FA}88.53              & \cellcolor[HTML]{EAF3FA}70.49               & \cellcolor[HTML]{EAF3FA}75.27          & \cellcolor[HTML]{EAF3FA}53.9           & \cellcolor[HTML]{EAF3FA}73.39  & \cellcolor[HTML]{EAF3FA}73.49      \\ 
                   & \cellcolor[HTML]{EAF3FA}                  &  \cellcolor[HTML]{EAF3FA}Generator          &  \cellcolor[HTML]{EAF3FA}68.09          & \cellcolor[HTML]{EAF3FA}80.37         &  \cellcolor[HTML]{EAF3FA}77.15         &  \cellcolor[HTML]{EAF3FA}90.86              & \cellcolor[HTML]{EAF3FA}77.54               &  \cellcolor[HTML]{EAF3FA}79.54         &  \cellcolor[HTML]{EAF3FA}60.55          &  \cellcolor[HTML]{EAF3FA}78.93 &  \cellcolor[HTML]{EAF3FA}76.63        \\ 
                   & \cellcolor[HTML]{D0E7F7}                  & \cellcolor[HTML]{D0E7F7}PredGen         & \cellcolor[HTML]{D0E7F7}\textbf{73.82}          & \cellcolor[HTML]{D0E7F7}82.76         & \cellcolor[HTML]{D0E7F7}79.87         & \cellcolor[HTML]{D0E7F7}93.14              & \cellcolor[HTML]{D0E7F7}83.21               & \cellcolor[HTML]{D0E7F7}84.79          & \cellcolor[HTML]{D0E7F7}\textbf{60.86}          & \cellcolor[HTML]{D0E7F7}78.93 & \cellcolor[HTML]{D0E7F7}79.67        \\ 
                    & \cellcolor[HTML]{EAF3FA}    AdaLoRA           & \cellcolor[HTML]{EAF3FA}    Predictor           & \cellcolor[HTML]{EAF3FA}    65.22           & \cellcolor[HTML]{EAF3FA}    80.91          & \cellcolor[HTML]{EAF3FA}    78.88         & \cellcolor[HTML]{EAF3FA}    89.33              & \cellcolor[HTML]{EAF3FA}    70.13                & \cellcolor[HTML]{EAF3FA}    75.39           & \cellcolor[HTML]{EAF3FA}    54.29           & \cellcolor[HTML]{EAF3FA}    73.81    & \cellcolor[HTML]{EAF3FA}    73.50        \\ 
                    & \cellcolor[HTML]{EAF3FA}                    & \cellcolor[HTML]{EAF3FA}    Generator           & \cellcolor[HTML]{EAF3FA}    70.03          & \cellcolor[HTML]{EAF3FA}    80.69         & \cellcolor[HTML]{EAF3FA}    77.06          & \cellcolor[HTML]{EAF3FA}    90.85              & \cellcolor[HTML]{EAF3FA}    76.47                & \cellcolor[HTML]{EAF3FA}    79.50          & \cellcolor[HTML]{EAF3FA}    59.30          & \cellcolor[HTML]{EAF3FA}    74.22  & \cellcolor[HTML]{EAF3FA}     76.02      \\ 
                    & \cellcolor[HTML]{D0E7F7}      & \cellcolor[HTML]{D0E7F7}    PredGen            & \cellcolor[HTML]{D0E7F7}    72.45           & \cellcolor[HTML]{D0E7F7}    84.54          & \cellcolor[HTML]{D0E7F7}    \textbf{80.42}         & \cellcolor[HTML]{D0E7F7}    93.19               & \cellcolor[HTML]{D0E7F7}    82.26                & \cellcolor[HTML]{D0E7F7}    84.80           & \cellcolor[HTML]{D0E7F7}    59.53           & \cellcolor[HTML]{D0E7F7}    79.11  & \cellcolor[HTML]{D0E7F7}    79.54        \\ 
                    & \cellcolor[HTML]{EAF3FA}    RoCoFT            & \cellcolor[HTML]{EAF3FA}    Predictor           & \cellcolor[HTML]{EAF3FA}    66.48           & \cellcolor[HTML]{EAF3FA}    81.53         & \cellcolor[HTML]{EAF3FA}    79.85          & \cellcolor[HTML]{EAF3FA}    89.24               & \cellcolor[HTML]{EAF3FA}    68.84                & \cellcolor[HTML]{EAF3FA}    76.85           & \cellcolor[HTML]{EAF3FA}    54.38           & \cellcolor[HTML]{EAF3FA}    73.38   & \cellcolor[HTML]{EAF3FA}    73.82       \\ 
                    & \cellcolor[HTML]{EAF3FA}              & \cellcolor[HTML]{EAF3FA}    Generator           & \cellcolor[HTML]{EAF3FA}    69.36           & \cellcolor[HTML]{EAF3FA}    80.08          & \cellcolor[HTML]{EAF3FA}    77.99          & \cellcolor[HTML]{EAF3FA}    89.46               & \cellcolor[HTML]{EAF3FA}    77.41               & \cellcolor[HTML]{EAF3FA}    79.46           & \cellcolor[HTML]{EAF3FA}    59.09           & \cellcolor[HTML]{EAF3FA}    76.90   & \cellcolor[HTML]{EAF3FA}    76.22       \\ 
                    & \cellcolor[HTML]{D0E7F7}              & \cellcolor[HTML]{D0E7F7}    PredGen            & \cellcolor[HTML]{D0E7F7}    73.62           & \cellcolor[HTML]{D0E7F7}    84.32          & \cellcolor[HTML]{D0E7F7}    79.65          & \cellcolor[HTML]{D0E7F7}    92.64               & \cellcolor[HTML]{D0E7F7}    \textbf{83.83}                 & \cellcolor[HTML]{D0E7F7}    84.67           & \cellcolor[HTML]{D0E7F7}     60.81           & \cellcolor[HTML]{D0E7F7}    \textbf{80.20}  & \cellcolor[HTML]{D0E7F7}    79.97        \\ 
                    & \cellcolor[HTML]{EAF3FA}      DoRA           & \cellcolor[HTML]{EAF3FA}    Predictor           & \cellcolor[HTML]{EAF3FA}    66.23           & \cellcolor[HTML]{EAF3FA}    82.24  & \cellcolor[HTML]{EAF3FA}    78.83          & \cellcolor[HTML]{EAF3FA}    88.21               & \cellcolor[HTML]{EAF3FA}    71.36                & \cellcolor[HTML]{EAF3FA}    73.78           & \cellcolor[HTML]{EAF3FA}    54.22           & \cellcolor[HTML]{EAF3FA}    75.48   & \cellcolor[HTML]{EAF3FA}     73.79      \\ 
                    & \cellcolor[HTML]{EAF3FA}                      & \cellcolor[HTML]{EAF3FA}    Generator           & \cellcolor[HTML]{EAF3FA}    69.56           & \cellcolor[HTML]{EAF3FA}    80.33          & \cellcolor[HTML]{EAF3FA}    77.09          & \cellcolor[HTML]{EAF3FA}    90.10               & \cellcolor[HTML]{EAF3FA}    76.69                & \cellcolor[HTML]{EAF3FA}    79.66           & \cellcolor[HTML]{EAF3FA}    59.05           & \cellcolor[HTML]{EAF3FA}    76.96  & \cellcolor[HTML]{EAF3FA}    76.18          \\ 
                    & \cellcolor[HTML]{D0E7F7}         & \cellcolor[HTML]{D0E7F7}    PredGen            & \cellcolor[HTML]{D0E7F7}    73.45           & \cellcolor[HTML]{D0E7F7}    \textbf{84.73}          & \cellcolor[HTML]{D0E7F7}    80.11          & \cellcolor[HTML]{D0E7F7}    \textbf{93.28}               & \cellcolor[HTML]{D0E7F7}    83.80                & \cellcolor[HTML]{D0E7F7}    \textbf{84.99}           & \cellcolor[HTML]{D0E7F7}    60.19           & \cellcolor[HTML]{D0E7F7}    79.89  & \cellcolor[HTML]{D0E7F7}     \textbf{80.06}       \\ 
\hline
\multirow{12}{*}{\textbf{Llama2-13B}} 
                    & \cellcolor[HTML]{F8EDEB}     LoRA              & \cellcolor[HTML]{F8EDEB}    Predictor           & \cellcolor[HTML]{F8EDEB}    68.37          & \cellcolor[HTML]{F8EDEB}    83.42          & \cellcolor[HTML]{F8EDEB}    81.34    & \cellcolor[HTML]{F8EDEB}    91.54               & \cellcolor[HTML]{F8EDEB}    72.32                & \cellcolor[HTML]{F8EDEB}    79.24           & \cellcolor[HTML]{F8EDEB}    55.12           & \cellcolor[HTML]{F8EDEB}    78.23    & \cellcolor[HTML]{F8EDEB}     76.20     \\ 
                    & \cellcolor[HTML]{F8EDEB}    \                  & \cellcolor[HTML]{F8EDEB}    Generator           & \cellcolor[HTML]{F8EDEB}    71.19          & \cellcolor[HTML]{F8EDEB}    83.99          & \cellcolor[HTML]{F8EDEB}    81.15          & \cellcolor[HTML]{F8EDEB}    92.86               & \cellcolor[HTML]{F8EDEB}    83.24                 & \cellcolor[HTML]{F8EDEB}    83.35           & \cellcolor[HTML]{F8EDEB}    66.05           & \cellcolor[HTML]{F8EDEB}    81.37   & \cellcolor[HTML]{F8EDEB}    80.40       \\ 
                    & \cellcolor[HTML]{F3DBD7}        & \cellcolor[HTML]{F3DBD7}    PredGen            & \cellcolor[HTML]{F3DBD7}    73.43           & \cellcolor[HTML]{F3DBD7}    85.32          & \cellcolor[HTML]{F3DBD7}    \textbf{82.45}          & \cellcolor[HTML]{F3DBD7}    94.25               & \cellcolor[HTML]{F3DBD7}    85.82                & \cellcolor[HTML]{F3DBD7}    \textbf{86.79}           & \cellcolor[HTML]{F3DBD7}    68.24           & \cellcolor[HTML]{F3DBD7}    \textbf{85.41}  & \cellcolor[HTML]{F3DBD7}    82.71         \\ 
                    & \cellcolor[HTML]{F8EDEB}     AdaLoRA           & \cellcolor[HTML]{F8EDEB}    Predictor           & \cellcolor[HTML]{F8EDEB}    69.83           & \cellcolor[HTML]{F8EDEB}    84.38           & \cellcolor[HTML]{F8EDEB}    80.27          & \cellcolor[HTML]{F8EDEB}    90.19               & \cellcolor[HTML]{F8EDEB}    72.22                & \cellcolor[HTML]{F8EDEB}    78.77           & \cellcolor[HTML]{F8EDEB}    53.75           & \cellcolor[HTML]{F8EDEB}    79.56   & \cellcolor[HTML]{F8EDEB}    76.12       \\ 
                    & \cellcolor[HTML]{F8EDEB}                    & \cellcolor[HTML]{F8EDEB}    Generator           & \cellcolor[HTML]{F8EDEB}    71.71           & \cellcolor[HTML]{F8EDEB}    82.55          & \cellcolor[HTML]{F8EDEB}    81.88          & \cellcolor[HTML]{F8EDEB}    92.61               & \cellcolor[HTML]{F8EDEB}    83.01
                 & \cellcolor[HTML]{F8EDEB}    83.04           & \cellcolor[HTML]{F8EDEB}    67.33           & \cellcolor[HTML]{F8EDEB}    81.76  & \cellcolor[HTML]{F8EDEB}     80.49        \\ 
                    & \cellcolor[HTML]{F3DBD7}                & \cellcolor[HTML]{F3DBD7}    PredGen            & \cellcolor[HTML]{F3DBD7}    74.21          & \cellcolor[HTML]{F3DBD7}    85.99          & \cellcolor[HTML]{F3DBD7}    82.16          & \cellcolor[HTML]{F3DBD7}    94.51               & \cellcolor[HTML]{F3DBD7}    86.09                & \cellcolor[HTML]{F3DBD7}    86.42           & \cellcolor[HTML]{F3DBD7}    \textbf{69.73}           & \cellcolor[HTML]{F3DBD7}    84.98  & \cellcolor[HTML]{F3DBD7}      \textbf{83.01}       \\ 
                    & \cellcolor[HTML]{F8EDEB}     RoCoFT            & \cellcolor[HTML]{F8EDEB}    Predictor           & \cellcolor[HTML]{F8EDEB}    68.22           & \cellcolor[HTML]{F8EDEB}    82.90           & \cellcolor[HTML]{F8EDEB}    79.99           & \cellcolor[HTML]{F8EDEB}    91.28               & \cellcolor[HTML]{F8EDEB}    71.60                & \cellcolor[HTML]{F8EDEB}    79.21           & \cellcolor[HTML]{F8EDEB}    57.26           & \cellcolor[HTML]{F8EDEB}    78.56   & \cellcolor[HTML]{F8EDEB}    76.13         \\ 
                    & \cellcolor[HTML]{F8EDEB}                      & \cellcolor[HTML]{F8EDEB}    Generator           & \cellcolor[HTML]{F8EDEB}    71.44           & \cellcolor[HTML]{F8EDEB}    83.52          & \cellcolor[HTML]{F8EDEB}    79.50          & \cellcolor[HTML]{F8EDEB}    91.84               & \cellcolor[HTML]{F8EDEB}    83.20                & \cellcolor[HTML]{F8EDEB}    83.39           & \cellcolor[HTML]{F8EDEB}    68.06           & \cellcolor[HTML]{F8EDEB}    81.73   & \cellcolor[HTML]{F8EDEB}    80.33       \\ 
                    & \cellcolor[HTML]{F3DBD7}             & \cellcolor[HTML]{F3DBD7}    PredGen            & \cellcolor[HTML]{F3DBD7}    \textbf{74.27}           & \cellcolor[HTML]{F3DBD7}    \textbf{86.13}          & \cellcolor[HTML]{F3DBD7}    81.71          & \cellcolor[HTML]{F3DBD7}    \textbf{94.58}               & \cellcolor[HTML]{F3DBD7}    86.16                & \cellcolor[HTML]{F3DBD7}    85.79           & \cellcolor[HTML]{F3DBD7}    69.22           & \cellcolor[HTML]{F3DBD7}    85.29   & \cellcolor[HTML]{F3DBD7}    82.89       \\ 
                    & \cellcolor[HTML]{F8EDEB}     DoRA              & \cellcolor[HTML]{F8EDEB}    Predictor           & \cellcolor[HTML]{F8EDEB}    69.18           & \cellcolor[HTML]{F8EDEB}    83.20          & \cellcolor[HTML]{F8EDEB}    80.84          & \cellcolor[HTML]{F8EDEB}    90.38               & \cellcolor[HTML]{F8EDEB}    72.43                & \cellcolor[HTML]{F8EDEB}    75.17           & \cellcolor[HTML]{F8EDEB}    57.68           & \cellcolor[HTML]{F8EDEB}    80.35  & \cellcolor[HTML]{F8EDEB}    76.15        \\ 
                    & \cellcolor[HTML]{F8EDEB}                       & \cellcolor[HTML]{F8EDEB}    Generator           & \cellcolor[HTML]{F8EDEB}    71.36           & \cellcolor[HTML]{F8EDEB}    83.73          & \cellcolor[HTML]{F8EDEB}    79.54          & \cellcolor[HTML]{F8EDEB}    91.27              & \cellcolor[HTML]{F8EDEB}    83.62                & \cellcolor[HTML]{F8EDEB}    83.61           & \cellcolor[HTML]{F8EDEB}    66.32           & \cellcolor[HTML]{F8EDEB}    81.54   & \cellcolor[HTML]{F8EDEB}     80.12        \\ 
                    & \cellcolor[HTML]{F3DBD7}              & \cellcolor[HTML]{F3DBD7}    PredGen            & \cellcolor[HTML]{F3DBD7}    74.18           & \cellcolor[HTML]{F3DBD7}    85.88          & \cellcolor[HTML]{F3DBD7}    81.41          & \cellcolor[HTML]{F3DBD7}    93.62               & \cellcolor[HTML]{F3DBD7}    \textbf{86.76}                & \cellcolor[HTML]{F3DBD7}    86.25           & \cellcolor[HTML]{F3DBD7}    69.58           & \cellcolor[HTML]{F3DBD7}    84.77   & \cellcolor[HTML]{F3DBD7}    82.81       \\ 
\hline
\multirow{12}{*}{\textbf{Llama2-8B}} 
                    & \cellcolor[HTML]{ECEEFF}    LoRA              & \cellcolor[HTML]{ECEEFF}    Predictor           & \cellcolor[HTML]{ECEEFF}    68.44           & \cellcolor[HTML]{ECEEFF}    82.93          & \cellcolor[HTML]{ECEEFF}    79.84          & \cellcolor[HTML]{ECEEFF}    91.47               & \cellcolor[HTML]{ECEEFF}    71.58                & \cellcolor[HTML]{ECEEFF}    77.97           & \cellcolor[HTML]{ECEEFF}    56.02           & \cellcolor[HTML]{ECEEFF}    74.49  & \cellcolor[HTML]{ECEEFF}     75.34        \\ 
                    & \cellcolor[HTML]{ECEEFF}                     & \cellcolor[HTML]{ECEEFF}    Generator           & \cellcolor[HTML]{ECEEFF}    71.31           & \cellcolor[HTML]{ECEEFF}    81.45          & \cellcolor[HTML]{ECEEFF}    79.05          & \cellcolor[HTML]{ECEEFF}    90.65               & \cellcolor[HTML]{ECEEFF}    82.46                & \cellcolor[HTML]{ECEEFF}    82.83           & \cellcolor[HTML]{ECEEFF}    62.33           & \cellcolor[HTML]{ECEEFF}    76.64   & \cellcolor[HTML]{ECEEFF}    78.34       \\ 
                   & \cellcolor[HTML]{D8DBF5}                   & \cellcolor[HTML]{D8DBF5}    PredGen            & \cellcolor[HTML]{D8DBF5}    72.57           & \cellcolor[HTML]{D8DBF5}    83.63          & \cellcolor[HTML]{D8DBF5}    \textbf{81.72}          & \cellcolor[HTML]{D8DBF5}    92.98               & \cellcolor[HTML]{D8DBF5}    84.76                & \cellcolor[HTML]{D8DBF5}    \textbf{84.78}           & \cellcolor[HTML]{D8DBF5}    \textbf{64.64}           & \cellcolor[HTML]{D8DBF5}    80.54  & \cellcolor[HTML]{D8DBF5}    80.70        \\ 
                    & \cellcolor[HTML]{ECEEFF}     AdaLoRA           & \cellcolor[HTML]{ECEEFF}    Predictor           & \cellcolor[HTML]{ECEEFF}    68.11           & \cellcolor[HTML]{ECEEFF}    81.50           & \cellcolor[HTML]{ECEEFF}    79.88           & \cellcolor[HTML]{ECEEFF}    89.49               & \cellcolor[HTML]{ECEEFF}    71.37                & \cellcolor[HTML]{ECEEFF}    78.97           & \cellcolor[HTML]{ECEEFF}    54.72    & \cellcolor[HTML]{ECEEFF}    75.63   & \cellcolor[HTML]{ECEEFF}    74.96        \\ 
                    & \cellcolor[HTML]{ECEEFF}                   & \cellcolor[HTML]{ECEEFF}    Generator           & \cellcolor[HTML]{ECEEFF}    70.62           & \cellcolor[HTML]{ECEEFF}    82.48  & \cellcolor[HTML]{ECEEFF}    79.15          & \cellcolor[HTML]{ECEEFF}    91.17               & \cellcolor[HTML]{ECEEFF}    83.13                & \cellcolor[HTML]{ECEEFF}    82.62           & \cellcolor[HTML]{ECEEFF}    61.77           & \cellcolor[HTML]{ECEEFF}    78.53        & \cellcolor[HTML]{ECEEFF}    78.68  \\ 
                    & \cellcolor[HTML]{D8DBF5}                & \cellcolor[HTML]{D8DBF5}    PredGen            & \cellcolor[HTML]{D8DBF5}    73.10           & \cellcolor[HTML]{D8DBF5}    \textbf{84.88}          & \cellcolor[HTML]{D8DBF5}    80.61          & \cellcolor[HTML]{D8DBF5}    \textbf{93.22}               & \cellcolor[HTML]{D8DBF5}    85.23                & \cellcolor[HTML]{D8DBF5}    84.72           & \cellcolor[HTML]{D8DBF5}    62.81           & \cellcolor[HTML]{D8DBF5}    81.56  & \cellcolor[HTML]{D8DBF5}    80.77        \\ 
                    & \cellcolor[HTML]{ECEEFF}    RoCoFT            & \cellcolor[HTML]{ECEEFF}    Predictor           & \cellcolor[HTML]{ECEEFF}    67.96           & \cellcolor[HTML]{ECEEFF}    76.59          & \cellcolor[HTML]{ECEEFF}    79.92          & \cellcolor[HTML]{ECEEFF}    89.63               & \cellcolor[HTML]{ECEEFF}    72.02                & \cellcolor[HTML]{ECEEFF}    76.39           & \cellcolor[HTML]{ECEEFF}    54.92           & \cellcolor[HTML]{ECEEFF}    74.41  & \cellcolor[HTML]{ECEEFF}    73.98        \\ 
                    & \cellcolor[HTML]{ECEEFF}                   & \cellcolor[HTML]{ECEEFF}    Generator           & \cellcolor[HTML]{ECEEFF}    71.79           & \cellcolor[HTML]{ECEEFF}    83.23          & \cellcolor[HTML]{ECEEFF}    79.37          & \cellcolor[HTML]{ECEEFF}    90.84               & \cellcolor[HTML]{ECEEFF}    82.74                & \cellcolor[HTML]{ECEEFF}    82.67           & \cellcolor[HTML]{ECEEFF}    62.03           & \cellcolor[HTML]{ECEEFF}    77.73  & \cellcolor[HTML]{ECEEFF}    78.80 \\ 
                    & \cellcolor[HTML]{D8DBF5}        & \cellcolor[HTML]{D8DBF5}    PredGen            & \cellcolor[HTML]{D8DBF5}    \textbf{74.76}           & \cellcolor[HTML]{D8DBF5}    84.81          & \cellcolor[HTML]{D8DBF5}    80.86    & \cellcolor[HTML]{D8DBF5}    92.44               & \cellcolor[HTML]{D8DBF5}    \textbf{85.87}                & \cellcolor[HTML]{D8DBF5}    84.49    & \cellcolor[HTML]{D8DBF5}    62.97  & \cellcolor[HTML]{D8DBF5}    81.14     & \cellcolor[HTML]{D8DBF5}     \textbf{80.92}    \\ 
                    & \cellcolor[HTML]{ECEEFF}     DoRA              & \cellcolor[HTML]{ECEEFF}    Predictor           & \cellcolor[HTML]{ECEEFF}    67.88           & \cellcolor[HTML]{ECEEFF}    82.12          & \cellcolor[HTML]{ECEEFF}    80.26          & \cellcolor[HTML]{ECEEFF}    91.68     & \cellcolor[HTML]{ECEEFF}    71.68                & \cellcolor[HTML]{ECEEFF}    76.36           & \cellcolor[HTML]{ECEEFF}    54.42           & \cellcolor[HTML]{ECEEFF}    77.57  & \cellcolor[HTML]{ECEEFF}    75.25\\ 
                    & \cellcolor[HTML]{ECEEFF}                  & \cellcolor[HTML]{ECEEFF}    Generator           & \cellcolor[HTML]{ECEEFF}    72.32           & \cellcolor[HTML]{ECEEFF}    82.38          & \cellcolor[HTML]{ECEEFF}    80.01          & \cellcolor[HTML]{ECEEFF}    90.85               & \cellcolor[HTML]{ECEEFF}    83.38                & \cellcolor[HTML]{ECEEFF}    82.48           & \cellcolor[HTML]{ECEEFF}    61.03   & \cellcolor[HTML]{ECEEFF}    78.40      & \cellcolor[HTML]{ECEEFF}     78.86    \\ 
                    & \cellcolor[HTML]{D8DBF5}        & \cellcolor[HTML]{D8DBF5}    PredGen            & \cellcolor[HTML]{D8DBF5}    74.21           & \cellcolor[HTML]{D8DBF5}    83.59          & \cellcolor[HTML]{D8DBF5}    81.24          & \cellcolor[HTML]{D8DBF5}    93.17    & \cellcolor[HTML]{D8DBF5}    84.99                & \cellcolor[HTML]{D8DBF5}    84.72   & \cellcolor[HTML]{D8DBF5}    62.26           & \cellcolor[HTML]{D8DBF5}    \textbf{81.68}      & \cellcolor[HTML]{D8DBF5}     80.73   \\ \hline
\end{tabular}}
\caption{Performance of Classification with Different PEFT Methods Across Benchmarks. The best results are highlighted in bold for each model.}
\label{tab:classification}
\end{table*}

\paragraph{Main Results:} Table \ref{tab:classification} presents the classification performance of Llama models using different PEFT methods. PredGen consistently outperforms both the Predictor and Generator models across all tasks. For Llama2-7B, PredGen achieves an average accuracy of 79.67\%, surpassing both the Predictor (73.49\%) and Generator (76.63\%). Similarly, for Llama2-13B, PredGen reaches an average accuracy of 82.71\%, outperforming the other methods (76.20\% for Predictor and 80.40\% for Generator). Finally, for Llama2-8B, PredGen achieves an average accuracy of 80.92\%, again showing superior performance compared to the other models.

\begin{table*}[ht]
\centering
\resizebox{\textwidth}{!}{%
\begin{tabular}{l|l|l|c|c|c|c|c|c|c}
\rowcolor{gray!20}
\textbf{Model}     & \textbf{PEFT}    & \textbf{Method}    & \textbf{ WASSA } & \textbf{SICK} & \textbf{STSB} & \textbf{LCP} & \textbf{CRP} & \textbf{Humicroedit} & \textbf{Avg.}\\ \hline
\multirow{12}{*}{\textbf{Llama2-7B}} 
                    & \cellcolor[HTML]{EAF3FA}    LoRA              & \cellcolor[HTML]{EAF3FA}    Predictor           & \cellcolor[HTML]{EAF3FA}    0.454/0.151         & \cellcolor[HTML]{EAF3FA}    0.860/0.280         & \cellcolor[HTML]{EAF3FA}    0.965/0.950         & \cellcolor[HTML]{EAF3FA}    0.930/0.105              & \cellcolor[HTML]{EAF3FA}    1.014/0.784    & \cellcolor[HTML]{EAF3FA}    1.348/1.046  & \cellcolor[HTML]{EAF3FA}    0.928/0.553\\ 
                    & \cellcolor[HTML]{EAF3FA}                     & \cellcolor[HTML]{EAF3FA}    Generator           & \cellcolor[HTML]{EAF3FA}    0.090/0.023           & \cellcolor[HTML]{EAF3FA}    0.340/0.195         & \cellcolor[HTML]{EAF3FA}    0.610/0.630          & \cellcolor[HTML]{EAF3FA}    0.900/0.105              & \cellcolor[HTML]{EAF3FA}    0.465/0.349  & \cellcolor[HTML]{EAF3FA}    0.650/0.505  & \cellcolor[HTML]{EAF3FA}    0.509/0.301 \\ 
                    & \cellcolor[HTML]{D0E7F7}               & \cellcolor[HTML]{D0E7F7}    PredGen           & \cellcolor[HTML]{D0E7F7}    0.088/0.022           & \cellcolor[HTML]{D0E7F7}    0.320/0.190          & \cellcolor[HTML]{D0E7F7}    0.576/\textbf{0.569}         & \cellcolor[HTML]{D0E7F7}    0.062/0.008               & \cellcolor[HTML]{D0E7F7}    0.420/0.280  & \cellcolor[HTML]{D0E7F7}    0.550/0.455  & \cellcolor[HTML]{D0E7F7}    0.338/0.257\\ 
                    & \cellcolor[HTML]{EAF3FA}     AdaLoRA           & \cellcolor[HTML]{EAF3FA}    Predictor           & \cellcolor[HTML]{EAF3FA}    0.424/0.148         & \cellcolor[HTML]{EAF3FA}    0.845/0.270          & \cellcolor[HTML]{EAF3FA}    0.950/0.935        & \cellcolor[HTML]{EAF3FA}    0.918/0.100             & \cellcolor[HTML]{EAF3FA}    1.020/0.790  & \cellcolor[HTML]{EAF3FA}    1.360/1.050  & \cellcolor[HTML]{EAF3FA}    0.920/0.549 \\ 
                    & \cellcolor[HTML]{EAF3FA}            & \cellcolor[HTML]{EAF3FA}    Generator           & \cellcolor[HTML]{EAF3FA}    0.087/0.022         & \cellcolor[HTML]{EAF3FA}    0.325/0.185         & \cellcolor[HTML]{EAF3FA}    0.600/0.620         & \cellcolor[HTML]{EAF3FA}    0.890/0.097              & \cellcolor[HTML]{EAF3FA}    0.455/0.335     & \cellcolor[HTML]{EAF3FA}     0.630/0.490   & \cellcolor[HTML]{EAF3FA}    0.498/0.291 \\ 
                    & \cellcolor[HTML]{D0E7F7}        & \cellcolor[HTML]{D0E7F7}    PredGen            & \cellcolor[HTML]{D0E7F7}    \textbf{0.080/0.020}           & \cellcolor[HTML]{D0E7F7}    0.305/0.185        & \cellcolor[HTML]{D0E7F7}    \textbf{0.575}/0.570         & \cellcolor[HTML]{D0E7F7}    \textbf{0.058/0.006}               & \cellcolor[HTML]{D0E7F7}    \textbf{0.405/0.270}        & \cellcolor[HTML]{D0E7F7}     \textbf{0.535/0.440}  & \cellcolor[HTML]{D0E7F7}    0.326/0.248  \\ 
                    & \cellcolor[HTML]{EAF3FA}    RoCoFT           & \cellcolor[HTML]{EAF3FA}    Predictor           & \cellcolor[HTML]{EAF3FA}    0.424/0.148           & \cellcolor[HTML]{EAF3FA}    0.854/0.274         & \cellcolor[HTML]{EAF3FA}    0.958/0.942         & \cellcolor[HTML]{EAF3FA}    0.924/0.102              & \cellcolor[HTML]{EAF3FA}    0.990/0.770   & \cellcolor[HTML]{EAF3FA}    1.340/1.040  & \cellcolor[HTML]{EAF3FA}   0.915/0.546\\ 
                    & \cellcolor[HTML]{EAF3FA}                    & \cellcolor[HTML]{EAF3FA}    Generator           & \cellcolor[HTML]{EAF3FA}    0.085/0.021          & \cellcolor[HTML]{EAF3FA}    0.332/0.191          & \cellcolor[HTML]{EAF3FA}    0.605/0.623         & \cellcolor[HTML]{EAF3FA}    0.895/0.099              & \cellcolor[HTML]{EAF3FA}    0.460/0.337    & \cellcolor[HTML]{EAF3FA}     0.641/0.497     & \cellcolor[HTML]{EAF3FA}     0.503/0.295  \\ 
                    & \cellcolor[HTML]{D0E7F7}                    & \cellcolor[HTML]{D0E7F7}    PredGen            & \cellcolor[HTML]{D0E7F7}    0.084/0.021          & \cellcolor[HTML]{D0E7F7}    0.311/0.187         & \cellcolor[HTML]{D0E7F7}    0.583/0.580         & \cellcolor[HTML]{D0E7F7}    0.06/0.007               & \cellcolor[HTML]{D0E7F7}    \textbf{0.405}/0.274   & \cellcolor[HTML]{D0E7F7}    0.543/0.448  & \cellcolor[HTML]{D0E7F7}    0.332/0.253 \\ 
                    & \cellcolor[HTML]{EAF3FA}  DoRA              & \cellcolor[HTML]{EAF3FA}    Predictor           & \cellcolor[HTML]{EAF3FA}    0.511/0.150           & \cellcolor[HTML]{EAF3FA}    0.850/0.275  & \cellcolor[HTML]{EAF3FA}    0.960/0.945         & \cellcolor[HTML]{EAF3FA}     0.922/0.104               & \cellcolor[HTML]{EAF3FA}    0.980/0.780       & \cellcolor[HTML]{EAF3FA}     1.355/1.048  & \cellcolor[HTML]{EAF3FA}    0.930/0.550\\ 
                    & \cellcolor[HTML]{EAF3FA}                   & \cellcolor[HTML]{EAF3FA}    Generator           & \cellcolor[HTML]{EAF3FA}    0.086/0.022          & \cellcolor[HTML]{EAF3FA}    0.330/0.190          & \cellcolor[HTML]{EAF3FA}    0.607/0.625          & \cellcolor[HTML]{EAF3FA}    0.885/0.100               & \cellcolor[HTML]{EAF3FA}    0.462/0.338      & \cellcolor[HTML]{EAF3FA}      0.645/0.500  & \cellcolor[HTML]{EAF3FA}     0.503/0.296 \\ 
                    & \cellcolor[HTML]{D0E7F7}     & \cellcolor[HTML]{D0E7F7}    PredGen            & \cellcolor[HTML]{D0E7F7}    0.085/0.021          & \cellcolor[HTML]{D0E7F7}    \textbf{0.301/0.184}          & \cellcolor[HTML]{D0E7F7}    0.580/0.578        & \cellcolor[HTML]{D0E7F7}    0.061/0.007               & \cellcolor[HTML]{D0E7F7}    0.415/0.275    & \cellcolor[HTML]{D0E7F7}     0.540/0.445    & \cellcolor[HTML]{D0E7F7}    0.333/0.252 \\ 
\hline
\multirow{12}{*}{\textbf{Llama2-13B}} 
                    & \cellcolor[HTML]{F8EDEB}    LoRA         & \cellcolor[HTML]{F8EDEB}    Predictor           & \cellcolor[HTML]{F8EDEB}    0.370/0.130         & \cellcolor[HTML]{F8EDEB}    0.800/0.250          & \cellcolor[HTML]{F8EDEB}    0.920/0.910    & \cellcolor[HTML]{F8EDEB}    0.880/0.090             & \cellcolor[HTML]{F8EDEB}    0.950/0.720   & \cellcolor[HTML]{F8EDEB}     1.280/1.000   & \cellcolor[HTML]{F8EDEB}    0.867/0.517 \\ 
                    & \cellcolor[HTML]{F8EDEB}            & \cellcolor[HTML]{F8EDEB}    Generator           & \cellcolor[HTML]{F8EDEB}    0.075/0.018          & \cellcolor[HTML]{F8EDEB}    0.310/0.175          & \cellcolor[HTML]{F8EDEB}    0.580/0.590          & \cellcolor[HTML]{F8EDEB}    0.850/0.090               & \cellcolor[HTML]{F8EDEB}    0.430/0.310     & \cellcolor[HTML]{F8EDEB}     0.600/0.460      & \cellcolor[HTML]{F8EDEB}    0.474/0.274    \\ 
                    & \cellcolor[HTML]{F3DBD7}          & \cellcolor[HTML]{F3DBD7}    PredGen            & \cellcolor[HTML]{F3DBD7}    0.074/0.018          & \cellcolor[HTML]{F3DBD7}    \textbf{0.287/0.169}    & \cellcolor[HTML]{F3DBD7}    0.550/0.540         & \cellcolor[HTML]{F3DBD7}    0.052/0.006               & \cellcolor[HTML]{F3DBD7}    0.380/0.250     & \cellcolor[HTML]{F3DBD7}     0.500/\textbf{0.400}     & \cellcolor[HTML]{F3DBD7}    0.308/0.231   \\ 
                    & \cellcolor[HTML]{F8EDEB}     AdaLoRA           & \cellcolor[HTML]{F8EDEB}    Predictor           & \cellcolor[HTML]{F8EDEB}     0.360/0.125          & \cellcolor[HTML]{F8EDEB}    0.810/0.255   & \cellcolor[HTML]{F8EDEB}    0.930/0.920         & \cellcolor[HTML]{F8EDEB}    0.890/0.095   & \cellcolor[HTML]{F8EDEB}    0.960/0.730   & \cellcolor[HTML]{F8EDEB}     1.300/1.010  & \cellcolor[HTML]{F8EDEB}    0.875/0.522 \\ 
                    & \cellcolor[HTML]{F8EDEB}                  & \cellcolor[HTML]{F8EDEB}    Generator           & \cellcolor[HTML]{F8EDEB}    0.078/0.019         & \cellcolor[HTML]{F8EDEB}    0.315/0.178          & \cellcolor[HTML]{F8EDEB}    0.585/0.600          & \cellcolor[HTML]{F8EDEB}    0.860/0.093               & \cellcolor[HTML]{F8EDEB}    0.440/0.320     & \cellcolor[HTML]{F8EDEB}    0.610/0.470  & \cellcolor[HTML]{F8EDEB}     0.481/0.280 \\ 
                    & \cellcolor[HTML]{F3DBD7}                   & \cellcolor[HTML]{F3DBD7}    PredGen            & \cellcolor[HTML]{F3DBD7}    0.078/0.019          & \cellcolor[HTML]{F3DBD7}    0.300/0.175          & \cellcolor[HTML]{F3DBD7}    \textbf{0.530/0.530}         & \cellcolor[HTML]{F3DBD7}   0.054/0.006             & \cellcolor[HTML]{F3DBD7}    0.390/0.255  & \cellcolor[HTML]{F3DBD7}    0.510/0.410  & \cellcolor[HTML]{F3DBD7}    0.315/0.236 \\ 
                    & \cellcolor[HTML]{F8EDEB}    RoCoFT            & \cellcolor[HTML]{F8EDEB}    Predictor           & \cellcolor[HTML]{F8EDEB}    0.380/0.135           & \cellcolor[HTML]{F8EDEB}    0.790/0.245           & \cellcolor[HTML]{F8EDEB}    0.910/0.900           & \cellcolor[HTML]{F8EDEB}    0.870/0.088               & \cellcolor[HTML]{F8EDEB}    0.940/0.710  & \cellcolor[HTML]{F8EDEB}    1.270/0.990  & \cellcolor[HTML]{F8EDEB}    0.860/0.511 \\ 
                    & \cellcolor[HTML]{F8EDEB}                    & \cellcolor[HTML]{F8EDEB}    Generator           & \cellcolor[HTML]{F8EDEB}    0.072/0.017          & \cellcolor[HTML]{F8EDEB}    0.305/0.172          & \cellcolor[HTML]{F8EDEB}    0.575/0.580          & \cellcolor[HTML]{F8EDEB}    0.845/0.088              & \cellcolor[HTML]{F8EDEB}    0.425/0.305    & \cellcolor[HTML]{F8EDEB}     0.590/0.450  & \cellcolor[HTML]{F8EDEB}    0.860/0.511\\ 
                    & \cellcolor[HTML]{F3DBD7}                   & \cellcolor[HTML]{F3DBD7}    PredGen            & \cellcolor[HTML]{F3DBD7}    \textbf{0.070}/0.017         & \cellcolor[HTML]{F3DBD7}    0.288/\textbf{0.169}         & \cellcolor[HTML]{F3DBD7}    0.545/0.538        & \cellcolor[HTML]{F3DBD7}    \textbf{0.053/0.007}               & \cellcolor[HTML]{F3DBD7}    \textbf{0.375/0.248}     & \cellcolor[HTML]{F3DBD7}     \textbf{0.495}/0.401  & \cellcolor[HTML]{F3DBD7}    0.307/0.232\\ 
                    & \cellcolor[HTML]{F8EDEB}     DoRA              & \cellcolor[HTML]{F8EDEB}    Predictor           & \cellcolor[HTML]{F8EDEB}    0.365/0.128           & \cellcolor[HTML]{F8EDEB}    0.805/0.252          & \cellcolor[HTML]{F8EDEB}    0.925/0.915         & \cellcolor[HTML]{F8EDEB}    0.924/0.102              & \cellcolor[HTML]{F8EDEB}    0.955/0.725   & \cellcolor[HTML]{F8EDEB}    1.290/1.005  & \cellcolor[HTML]{F8EDEB}    0.877/0.521 \\ 
                    & \cellcolor[HTML]{F8EDEB}                   & \cellcolor[HTML]{F8EDEB}    Generator           & \cellcolor[HTML]{F8EDEB}    0.076/0.018          & \cellcolor[HTML]{F8EDEB}    0.312/0.176        & \cellcolor[HTML]{F8EDEB}    0.590/0.605         & \cellcolor[HTML]{F8EDEB}    0.855/0.092             & \cellcolor[HTML]{F8EDEB}    0.435/0.315    & \cellcolor[HTML]{F8EDEB}     0.605/0.465  & \cellcolor[HTML]{F8EDEB}    0.479/0.279\\ 
                    & \cellcolor[HTML]{F3DBD7}            & \cellcolor[HTML]{F3DBD7}    PredGen            & \cellcolor[HTML]{F3DBD7}    \textbf{0.070/0.016}           & \cellcolor[HTML]{F3DBD7}    0.295/0.172        & \cellcolor[HTML]{F3DBD7}    0.555/0.548          & \cellcolor[HTML]{F3DBD7}    0.053/0.006               & \cellcolor[HTML]{F3DBD7}    0.385/0.252   & \cellcolor[HTML]{F3DBD7}     0.505/0.405   & \cellcolor[HTML]{F3DBD7}    0.311/0.233 \\ 
\hline
\multirow{12}{*}{\textbf{Llama2-8B}} 
                    & \cellcolor[HTML]{ECEEFF}    LoRA              & \cellcolor[HTML]{ECEEFF}    Predictor           & \cellcolor[HTML]{ECEEFF}    0.380/0.140           & \cellcolor[HTML]{ECEEFF}    0.820/0.260          & \cellcolor[HTML]{ECEEFF}    0.940/0.925          & \cellcolor[HTML]{ECEEFF}    0.910/0.098              & \cellcolor[HTML]{ECEEFF}    0.970/0.740  & \cellcolor[HTML]{ECEEFF}    1.310/1.020   & \cellcolor[HTML]{ECEEFF}    0.888/0.531\\ 
                    & \cellcolor[HTML]{ECEEFF}                    & \cellcolor[HTML]{ECEEFF}    Generator           & \cellcolor[HTML]{ECEEFF}    0.081/0.019         & \cellcolor[HTML]{ECEEFF}    0.320/0.180          & \cellcolor[HTML]{ECEEFF}    0.595/0.610          & \cellcolor[HTML]{ECEEFF}    0.870/0.095            & \cellcolor[HTML]{ECEEFF}    0.440/0.325    & \cellcolor[HTML]{ECEEFF}    0.620/0.480   & \cellcolor[HTML]{ECEEFF}     0.488/0.285 \\ 
                    & \cellcolor[HTML]{D8DBF5}                   & \cellcolor[HTML]{D8DBF5}    PredGen            & \cellcolor[HTML]{D8DBF5}    0.077/0.019           & \cellcolor[HTML]{D8DBF5}    0.298/0.173          & \cellcolor[HTML]{D8DBF5}    0.565/0.555         & \cellcolor[HTML]{D8DBF5}    \textbf{0.055/0.006}              & \cellcolor[HTML]{D8DBF5}    0.395/0.260      & \cellcolor[HTML]{D8DBF5}    0.520/0.420  & \cellcolor[HTML]{D8DBF5}   0.318/0.239 \\ 
                    & \cellcolor[HTML]{ECEEFF}    AdaLoRA           & \cellcolor[HTML]{ECEEFF}    Predictor           & \cellcolor[HTML]{ECEEFF}    0.375/0.135           & \cellcolor[HTML]{ECEEFF}    0.830/0.265          & \cellcolor[HTML]{ECEEFF}    0.945/0.930         & \cellcolor[HTML]{ECEEFF}    0.910/0.098               & \cellcolor[HTML]{ECEEFF}    0.980/0.750    & \cellcolor[HTML]{ECEEFF}      1.320/1.030  & \cellcolor[HTML]{ECEEFF}    0.893/0.535\\ 
                    & \cellcolor[HTML]{ECEEFF}                    & \cellcolor[HTML]{ECEEFF}    Generator           & \cellcolor[HTML]{ECEEFF}    0.080/0.020           & \cellcolor[HTML]{ECEEFF}    0.325/0.183  & \cellcolor[HTML]{ECEEFF}    0.600/0.615         & \cellcolor[HTML]{ECEEFF}    0.875/0.097              & \cellcolor[HTML]{ECEEFF}    0.450/0.330     & \cellcolor[HTML]{ECEEFF}     0.630/0.485   & \cellcolor[HTML]{ECEEFF}    0.493/0.288 \\ 
                    & \cellcolor[HTML]{D8DBF5}                   & \cellcolor[HTML]{D8DBF5}    PredGen            & \cellcolor[HTML]{D8DBF5}    0.078/0.019          & \cellcolor[HTML]{D8DBF5}    0.303/0.177          & \cellcolor[HTML]{D8DBF5}    0.570/0.560          & \cellcolor[HTML]{D8DBF5}    0.057/0.007              & \cellcolor[HTML]{D8DBF5}    0.400/0.265     & \cellcolor[HTML]{D8DBF5}    \textbf{0.509/0.410}  & \cellcolor[HTML]{D8DBF5}     0.323/0.243  \\ 
                    & \cellcolor[HTML]{ECEEFF}     RoCoFT            & \cellcolor[HTML]{ECEEFF}    Predictor           & \cellcolor[HTML]{ECEEFF}    0.390/0.145          & \cellcolor[HTML]{ECEEFF}    0.810/0.255          & \cellcolor[HTML]{ECEEFF}    0.935/0.920         & \cellcolor[HTML]{ECEEFF}    0.910/0.098              & \cellcolor[HTML]{ECEEFF}    0.960/0.730     & \cellcolor[HTML]{ECEEFF}    1.300/1.015   & \cellcolor[HTML]{ECEEFF}    0.884/0.527 \\ 
                    & \cellcolor[HTML]{ECEEFF}                & \cellcolor[HTML]{ECEEFF}    Generator           & \cellcolor[HTML]{ECEEFF}    0.082/0.020   & \cellcolor[HTML]{ECEEFF}    0.315/0.177         & \cellcolor[HTML]{ECEEFF}    0.585/0.605          & \cellcolor[HTML]{ECEEFF}    0.865/0.092              & \cellcolor[HTML]{ECEEFF}    0.435/0.320      & \cellcolor[HTML]{ECEEFF}     0.610/0.475   & \cellcolor[HTML]{ECEEFF}    0.482/0.282\\ 
                    & \cellcolor[HTML]{D8DBF5}                   & \cellcolor[HTML]{D8DBF5}    PredGen            & \cellcolor[HTML]{D8DBF5}    0.079/0.020           & \cellcolor[HTML]{D8DBF5}    \textbf{0.288/0.169}         & \cellcolor[HTML]{D8DBF5}    0.565/\textbf{0.558}    & \cellcolor[HTML]{D8DBF5}    0.058/0.007             & \cellcolor[HTML]{D8DBF5}     \textbf{0.385/0.255}  & \cellcolor[HTML]{D8DBF5}    0.530/0.425  & \cellcolor[HTML]{D8DBF5}    0.317/0.238 \\ 
                    & \cellcolor[HTML]{ECEEFF}    DoRA              & \cellcolor[HTML]{ECEEFF}    Predictor           & \cellcolor[HTML]{ECEEFF}    0.385/0.138           & \cellcolor[HTML]{ECEEFF}    0.825/0.261   & \cellcolor[HTML]{ECEEFF}    0.950/0.935         & \cellcolor[HTML]{ECEEFF}    0.905/0.096     & \cellcolor[HTML]{ECEEFF}    0.975/0.745  & \cellcolor[HTML]{ECEEFF}    1.315/1.025   & \cellcolor[HTML]{ECEEFF}     0.893/0.533     \\ 
                    & \cellcolor[HTML]{ECEEFF}                & \cellcolor[HTML]{ECEEFF}    Generator           & \cellcolor[HTML]{ECEEFF}    0.078/0.019          & \cellcolor[HTML]{ECEEFF}    0.322/0.179        & \cellcolor[HTML]{ECEEFF}    0.592/0.608         & \cellcolor[HTML]{ECEEFF}    0.880/0.096               & \cellcolor[HTML]{ECEEFF}    0.445/0.328  & \cellcolor[HTML]{ECEEFF}    0.625/0.482  & \cellcolor[HTML]{ECEEFF}   0.490/0.285\\ 
                    & \cellcolor[HTML]{D8DBF5}                  & \cellcolor[HTML]{D8DBF5}    PredGen            & \cellcolor[HTML]{D8DBF5}    \textbf{0.073/0.018}          & \cellcolor[HTML]{D8DBF5}    0.300/0.175         & \cellcolor[HTML]{D8DBF5}    \textbf{0.562/0.558}         & \cellcolor[HTML]{D8DBF5}    0.066/0.007    & \cellcolor[HTML]{D8DBF5}    0.390/0.262   & \cellcolor[HTML]{D8DBF5}    0.525/0.425     & \cellcolor[HTML]{D8DBF5}     0.319/0.241  \\ 
\hline
\end{tabular}%
}
\caption{Performance of regression with Different PEFT Methods Across Benchmarks.}
\label{tab:regression}
\end{table*}

Table \ref{tab:regression} presents the regression performance where PredGen consistently outperforms both the Predictor and Generator models in most tasks. For Llama2-7B, PredGen achieves an average score of 0.338, outperforming the Predictor (0.928) and Generator (0.509). Similarly, for Llama2-13B, PredGen shows an average score of 0.308, better than the Predictor (0.867) and Generator (0.474). Finally, for Llama2-8B, PredGen reaches an average of 0.319, surpassing both the Predictor (0.888) and Generator (0.493). These results indicate that PredGen outperforms standard approaches in classification and regression tasks because generating predictions as token sequences carries more mutual information, leading to higher accuracy.

\section{Ablation Study}
\begin{figure}[!t]
    \centering
    \includegraphics[width=0.9980\linewidth]{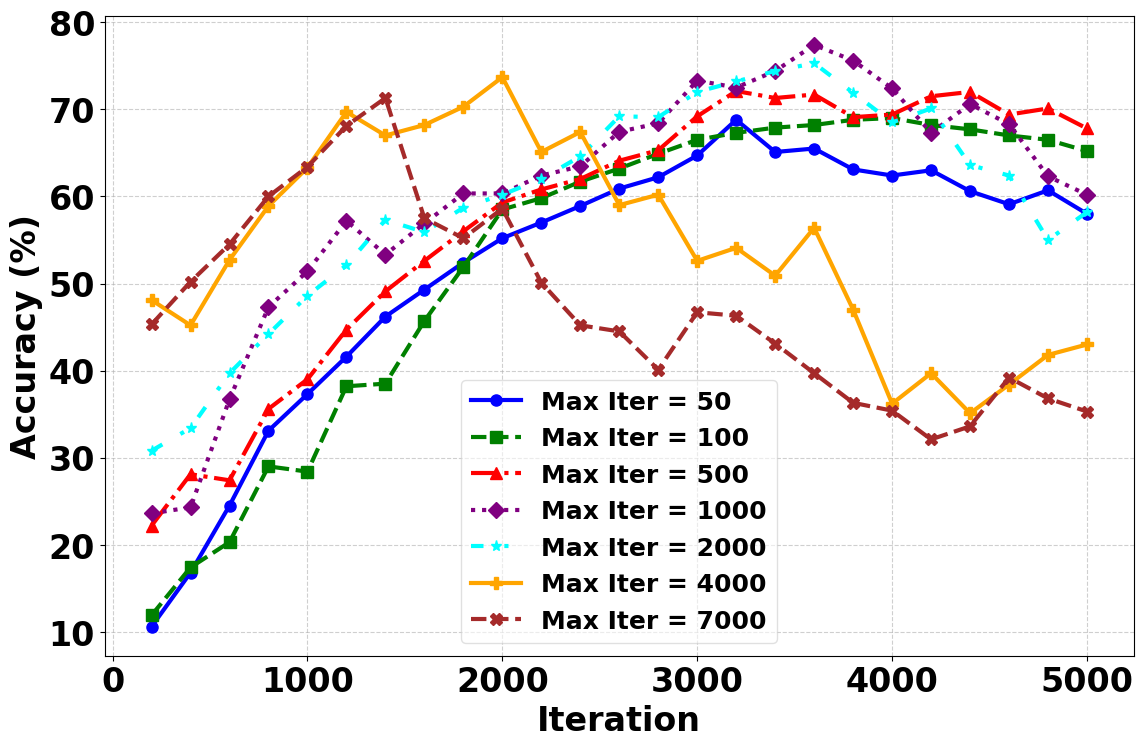}
  
    \caption{Effect of \texttt{max\_steps\_for\_sampling} on performance. A gradual transition (\texttt{max\_steps\_for\_sampling} = 1000) achieves the best performance, balancing reference-based and self-generated predictions.}

\label{fig:sampling}
\end{figure}
\paragraph{Scheduled Sampling vs. Performance}

We analyze how different scheduled sampling strategies affect accuracy on the MultiArith dataset \cite{roy2016solving}. Instead of always using ground-truth tokens as input, the model gradually shifts from using reference tokens to using its own generated tokens. Figure \ref{fig:sampling} shows how varying \texttt{max\_steps\_for\_sampling} impacts the performance.

If the transition happens too quickly (e.g., \texttt{max\_steps\_for\_sampling} = 50 or 100), the model receives too few reference tokens, leading to unstable predictions. On the other hand, if the transition is too slow (e.g., \texttt{max\_steps\_for\_sampling} = 7000), the model remains overly dependent on reference tokens and struggles during inference.

The best results occur when \texttt{max\_steps\_for\_sampling} = 1000, striking a balance between guidance from reference tokens and adaptation using self-generated tokens. This suggests that carefully tuning the transition period is key to improving the accuracy of generative prediction tasks.

\begin{figure}[!t]
    \centering
    \includegraphics[width=1.0\linewidth]{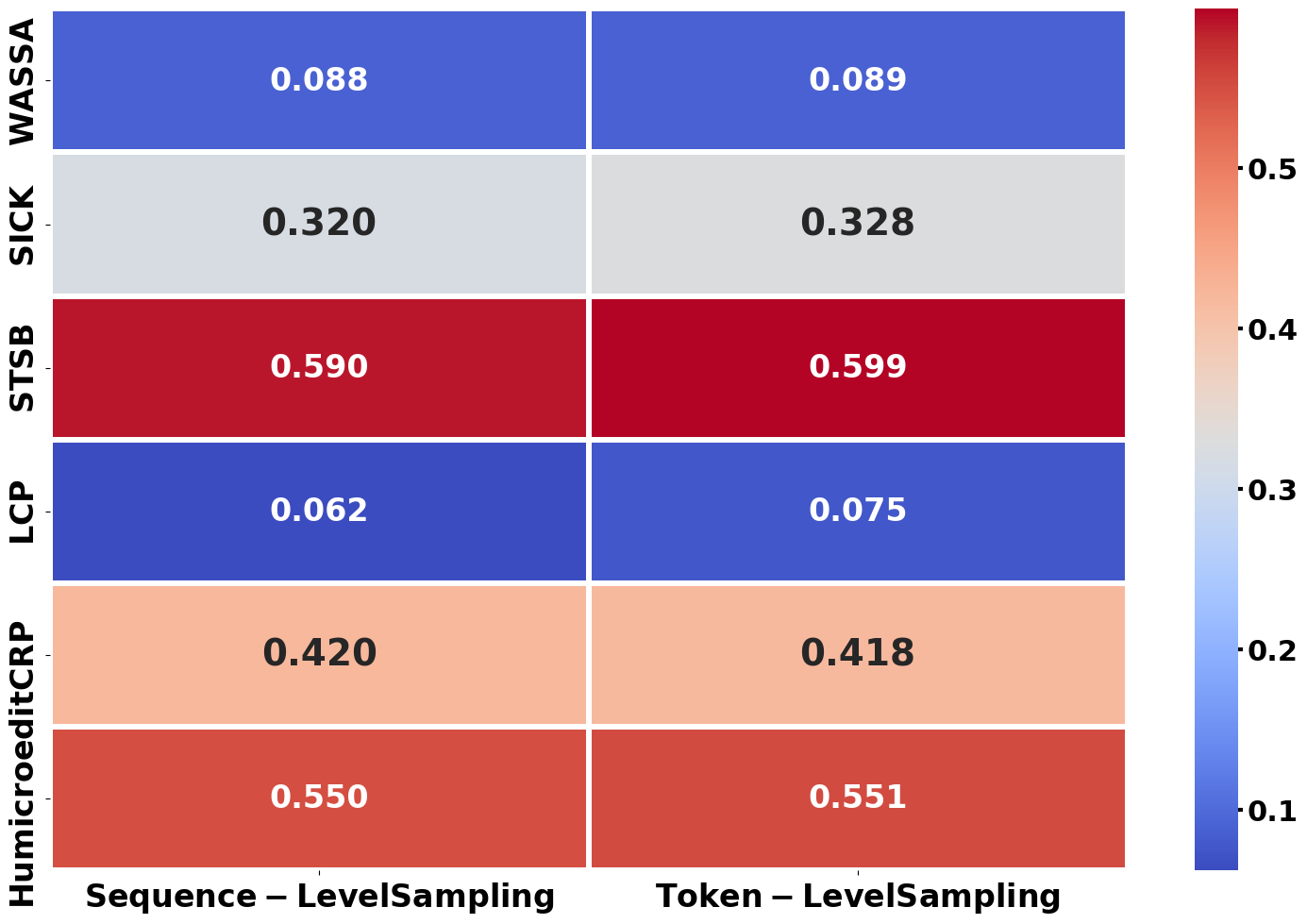}
\caption{MSE loss comparison between Sequence-Level and Token-Level Sampling across datasets.}
\label{fig:sampling_method_cmp}
\end{figure}
\paragraph{Token-Level vs. Sequence-Level Scheduled Sampling}

We analyze the impact of two different approaches in scheduled sampling: \textbf{Sequence-Level Sampling} and \textbf{Token-Level Sampling}. In \emph{Sequence-Level Sampling}, the entire output sequence \(\mathbf{Y}\) is either fully generated by the model or fully replaced with ground-truth tokens during training. In contrast, \emph{Token-Level Sampling} selectively decides for each token \(\mathbf{Y_t}\) whether to use the generated output or ground truth, allowing a more gradual transition.

Figure~\ref{fig:sampling_method_cmp} presents the MSE loss comparison between these two schedule sampling methods across six datasets. We observe that Token-Level Sampling consistently achieves slightly lower MSE across most datasets, indicating better alignment between generated tokens and the true outputs. For example, on \texttt{STSB}, the MSE decreases from 0.590 to 0.599, and on \texttt{LCP}, it improves from 0.062 to 0.075. This suggests that allowing the model to mix the generated and ground-truth tokens at a finer level helps it adapt more smoothly, reducing the sharp transitions between training and inference.

\begin{figure}[!t]
    \centering
    \includegraphics[width=0.9980\linewidth]{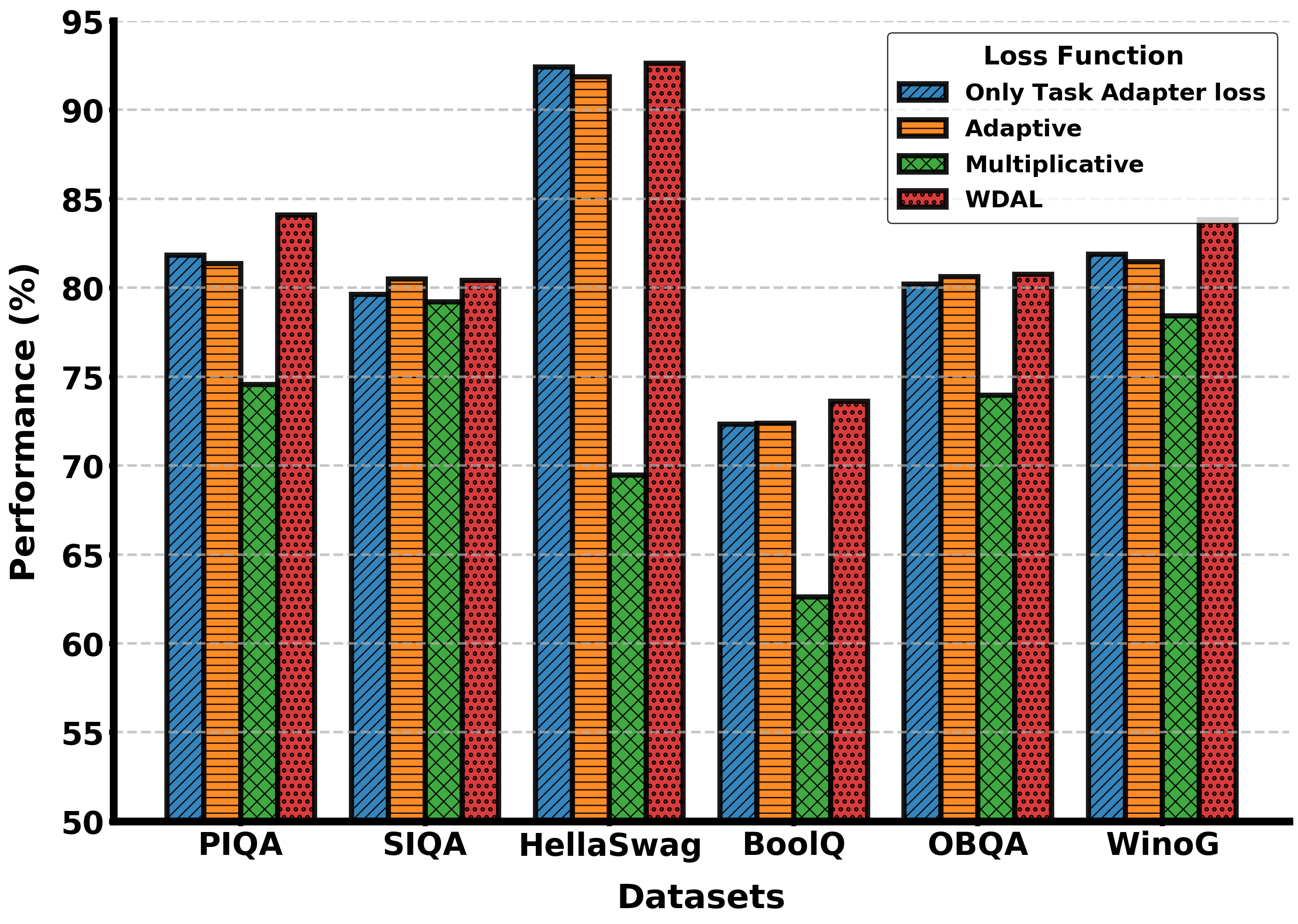}
\caption{Comparison of WDAL, Adaptive, and Multiplicative loss functions across multiple datasets.}

\label{fig:loss_cmp}
\end{figure}
\paragraph{Loss Function Comparison:}

We compare \textbf{WDAL} with three other loss functions: \textbf{Adaptive Loss}, \textbf{Multiplicative Loss}, and \textbf{Only Task Adapter Loss}. The \emph{Only Task Adapter Loss} trains the model using only the task adapter’s objective, without the generator’s loss. This means that the model focuses only on final predictions and does not align with the token-level generation process. Figure~\ref{fig:loss_cmp} shows the accuracy scores across the six datasets.

Our results show that \textbf{WDAL} consistently performs the best, achieving the highest accuracy across all datasets, including \texttt{HellaSwag} (92.64\%), \texttt{WinoGrande} (83.85\%), and \texttt{BoolQ} (73.62\%). The \emph{Only Task Adapter Loss} performs the worst, confirming that ignoring the generator’s loss weakens the overall performance. This suggests that learning from both token generation and final predictions is important. The \emph{Multiplicative Loss} is unstable, especially on \texttt{HellaSwag} (69.48\%) and \texttt{BoolQ} (62.63\%), likely due to an imbalance between the two objectives. The \emph{Adaptive Loss} is more stable but still lags behind WDAL, as it does not properly align token-level generation with task predictions.

\section{Related Work}

\paragraph{Token-Level Generation:}  
Generating output tokens using LLMs has been widely explored in tasks such as question answering \cite{rajpurkar2016squad, yang2019xlnet}, summarization \cite{lewis2019bart, zhang2020pegasus}, and machine translation \cite{vaswani2017attention, wu2016google}. However, using token-level generation for supervised learning tasks like structured prediction remains underexplored. Recent studies \cite{chen2023token, yu2024breaking} show that token-level generation can be more effective than pooled representations by aligning with the LLMs' pre-training objective, leading to better efficiency and robustness against errors.

\paragraph{Mutual Information:}  
Mutual information (MI) helps measure dependencies between features in deep learning \cite{cover1999elements, covert2023learning}. In language models, \citet{chen2024learning} used MI for Chain-of-Thought Distillation, while the MIST framework \cite{kamthawee2024mist} applied it to short-text clustering. Unlike these works, we use MI to show that token-level generation retains more information than pooled representations.

\paragraph{Mitigating Exposure Bias and Format Mismatch:}  
Exposure bias occurs when an autoregressive model is trained with ground-truth tokens but must rely on its own predictions during inference. Scheduled sampling \cite{bengio2015scheduled} helps reduce this gap. Format mismatch arises when generated tokens do not align with the required structured output. \citet{wang2022code4struct} improved coherence by extracting structured information from LLMs, while \citet{liu2022autoregressive} modeled structured outputs as action sequences to preserve dependencies.

Unlike previous works, we extend token-level generation to both regression and classification and provide theoretical and empirical proof of its advantages. We also integrate scheduled sampling with a task adapter to ensure generated tokens match numerical or categorical outputs, addressing exposure bias and format mismatch.

\section{Conclusion}

In this work, we explored the advantages of generation-based prediction over traditional classifier-based approaches. We provided both theoretical and empirical evidence showing that token-level generation retains more task-relevant information than pooled representations. Using the \textit{DPI}, we proved that generating tokens preserves strictly more mutual information with the target output, addressing the limitations of classification-based fine-tuning. To tackle key challenges in generative prediction, we introduced \textbf{PredGen}, an end-to-end framework that mitigates \textit{exposure bias} using scheduled sampling and ensures structured outputs through a task adapter. Furthermore, we proposed the \textbf{WDAL}, which aligns token generation with task-specific objectives, leading to more coherent and numerically accurate predictions. Extensive experiments on classification and regression benchmarks demonstrated that PredGen consistently outperforms standard baselines.

\section{Limitations}

While \textsc{PredGen} demonstrates strong performance in structured prediction tasks, it has several limitations that warrant further investigation:

\begin{itemize}
    \item \textbf{Inference Latency:} Generation-based prediction introduces additional computational overhead due to the sequential nature of autoregressive decoding. Unlike classification-based methods that produce outputs in a single forward pass, \textsc{PredGen} generates outputs token by token, leading to increased inference time, especially for long outputs.
    
    \item \textbf{Exposure Bias:} Although scheduled sampling helps mitigate exposure bias, it does not fully eliminate the issue. During training, the model sees ground-truth tokens, but during inference, it must rely on its own generated outputs. This transition can still cause compounding errors, particularly in long-horizon predictions.
    
    \item \textbf{Increased Training Time:} Scheduled sampling increases training time since it requires generating tokens to obtain the final representation. As token generation is inherently sequential and non-parallelizable, this process slows down training compared to traditional classification-based fine-tuning.
    
    \item \textbf{Task-Specific Adaptation:} \textsc{PredGen} depends on carefully designed task adapters to map generated tokens into structured outputs. Developing effective adapters for different tasks may require task-specific tuning, limiting the framework’s adaptability to new domains without additional engineering effort.
\end{itemize}

Addressing these limitations in future work could further improve the efficiency and generalizability of \textsc{PredGen}.

\bibliography{acl_latex}

\clearpage

\appendix

\onecolumn
\tableofcontents
\twocolumn

\section{Writer-Director Alignment Loss}
\label{sec:wdal-appendix}

\subsection{Full Derivation of WDAL}
Here, we present the step-by-step derivation of the WDAL for completeness. The WDAL is defined 
\begin{multline*}
  L_{\text{WDAL}} = \max\!\bigl(L_W^2, L_D^2\bigr)\,\\\exp\!\Bigl(-\Bigl|\log L_W - \log L_D\Bigr|\Bigr).
\end{multline*}

\paragraph{Multiplicative Form:}
We begin by defining the overall WDAL as the product of the writer’s loss \(L_W\) and the director’s loss \(L_D\):
\[
L_{\text{WDAL}} = L_W \cdot L_D.
\]
The key rationale is twofold:
\begin{enumerate}
    \item If the writer fails (\(L_W\) is large), the director (conditioned on the writer) must also fail; hence the product remains large.
    \item If the writer succeeds (\(L_W\) is small) but the director fails (\(L_D\) is large), the product remains large, enforcing joint success.
\end{enumerate}

\paragraph{Log-Sum-Exp Trick:}
To ensure numerical stability, we consider the logarithms:
\[
\log L_{\text{WDAL}} \;=\; \log L_W \;+\; \log L_D.
\]
However, adding \(\log L_W\) and \(\log L_D\) directly can be numerically unstable if \(L_W\) or \(L_D\) has a large disparity. 

To further stabilize computations, we introduce 
\[
M \;=\; \log\bigl(\max(L_W, L_D)\bigr),
\]
and rewrite:
\small
\[
\log L_{\text{WDAL}} 
= (\log L_W - M) \;+\; (\log L_D - M) \;+\; 2M .
\]
\normalsize
Exponentiating both sides gives:
\[
L_{\text{WDAL}} 
\;=\; e^{(\log L_W - M)} 
\;\cdot\; e^{(\log L_D - M)}
\;\cdot\; e^{2M} .
\]
Substituting \(M = \log(\max(L_W, L_D))\), we can factor out the largest term squared:
\begin{align*}
L_{\text{WDAL}} 
&= \max\!\bigl(L_W^2, L_D^2\bigr) \\
&\quad \cdot\; e^{\log L_W - \log \max(L_W, L_D)} \\
&\quad \cdot\; e^{\log L_D - \log \max(L_W, L_D)}.
\end{align*}

\paragraph{Case Analysis:}
We now consider two cases based on which of \(L_W\) or \(L_D\) is larger.

\subparagraph{Case I: \(L_W > L_D\).}
Then
\[
\max(L_W, L_D) = L_W 
\quad\Longrightarrow\quad
M = \log L_W,
\]
and
\[
L_{\text{WDAL}} 
= L_W^2 \, e^{\log L_D - \log L_W}.
\]

\subparagraph{Case II: \(L_D > L_W\).}
Then
\[
\max(L_W, L_D) = L_D 
\quad\Longrightarrow\quad
M = \log L_D,
\]
and
\[
L_{\text{WDAL}} 
= L_D^2 \, e^{\log L_W - \log L_D}.
\]

\paragraph{Generalized Form:}
Both cases can be combined by noting the exponent is 
\(\exp(-|\log L_W - \log L_D|)\). Thus, we obtain:
\begin{align*}
L_{\text{WDAL}} 
&= \max\!\bigl(L_W^2, L_D^2\bigr) \\
&\quad \cdot \exp\!\Bigl(-\bigl|\log L_W - \log L_D\bigr|\Bigr).
\end{align*}

\subsection{Interpretation}
\begin{itemize}
    \item \textbf{Authority Component \(\max(L_W^2, L_D^2)\):}
    Focuses on whichever loss is larger, ensuring the training signals address the most critical error source.
    \item \textbf{Alignment Penalty \(\exp(-|\log L_W - \log L_D|)\):}
    Penalizes mismatches between writer and director, pushing their losses to be consistent in log-space.
\end{itemize}

This completes the derivation and motivational breakdown of the Writer-Director Alignment Loss.

\subsection{Differentiability and Subdifferentiability of \texorpdfstring{\(\max(L_W^2, L_D^2)\)}{max(L\_W\^{}2, L\_D\^{}2)}}

The function \(\max(L_W^2, L_D^2)\) is defined as:
\begin{equation}
\label{eq:max_def}
\max(L_W^2, L_D^2) = 
\begin{cases} 
L_W^2, & \text{if } L_W^2 > L_D^2, \\
L_D^2, & \text{if } L_D^2 > L_W^2.
\end{cases}
\end{equation}
This function is continuous everywhere but introduces a non-differentiable point at \(L_W^2 = L_D^2\). To analyze its differentiability and subdifferentiability, we compute the partial derivatives (with respect to \(L_W\) and \(L_D\)) for \(L_W^2 \neq L_D^2\):

\begin{equation}
\label{eq:max_derivs}
\begin{aligned}
\frac{\partial}{\partial L_W}\max(L_W^2, L_D^2) &=
\begin{cases} 
2L_W, & \text{if } L_W^2 > L_D^2, \\
0,    & \text{if } L_W^2 < L_D^2,
\end{cases}\\[6pt]
\frac{\partial}{\partial L_D}\max(L_W^2, L_D^2) &=
\begin{cases} 
0,    & \text{if } L_W^2 > L_D^2, \\
2L_D, & \text{if } L_W^2 < L_D^2.
\end{cases}
\end{aligned}
\end{equation}

At the point \(L_W^2 = L_D^2\), the derivative is undefined due to the non-smooth transition between the two cases. However, \(\max(L_W^2, L_D^2)\) is \emph{subdifferentiable} at \(L_W^2 = L_D^2\), and its subdifferential can be expressed as:
\begin{equation}
\label{eq:max_subdiff}
\partial\,\max(L_W^2, L_D^2) 
= \Bigl\{\bigl(a \cdot 2L_W,\,(1-a)\cdot 2L_D\bigr).
\end{equation}

This subgradient property ensures that for any \(a \in [0,1]\),

\begin{multline*}
\frac{\partial}{\partial L_W}\max(L_W^2, L_D^2) 
+ \frac{\partial}{\partial L_D}\max(L_W^2, L_D^2) \\
= 2L_W + 2L_D.
\end{multline*}

Notably, \(\max(L_W^2, L_D^2)\) retains the convexity of the \(\max\) operator, ensuring its suitability in optimization methods even with the non-differentiability at \(L_W^2 = L_D^2\). Standard gradient-based approaches can use any valid subgradient from \(\partial\,\max(L_W^2, L_D^2)\) at that point, thereby preserving convergence guarantees in convex optimization frameworks.

\subsection*{Differentiability of \(\exp\!\bigl(-\lvert \log L_W - \log L_D\rvert\bigr)\)}

The term \(\bigl|\log L_W - \log L_D\bigr|\) is piecewise differentiable and can be written as:
\begin{multline*}
\label{eq:abs_term}
|\log L_W - \log L_D| = \\
\begin{cases} 
\log L_W \!-\! \log L_D, & \text{if } \log L_W \!\geq\! \log L_D, \\
\log L_D \!-\! \log L_W, & \text{if } \log L_D \!>\! \log L_W.
\end{cases}
\end{multline*}

Hence, its partial derivatives with respect to \(L_W\) and \(L_D\) are:
\begin{multline*}
\begin{aligned}
\frac{\partial}{\partial L_W}\bigl|\log L_W - \log L_D\bigr|
&= \\
\begin{cases}
\frac{1}{L_W},   & \text{if } \log L_W \geq \log L_D,\\
-\frac{1}{L_W},  & \text{if } \log L_D > \log L_W,
\end{cases} \\[4pt]
\frac{\partial}{\partial L_D}\bigl|\log L_W - \log L_D\bigr|
&= \\
\begin{cases}
-\frac{1}{L_D}, & \text{if } \log L_W \geq \log L_D,\\
\frac{1}{L_D},  & \text{if } \log L_D > \log L_W.
\end{cases}
\end{aligned}
\end{multline*}

By the chain rule, the exponential term
\(\exp\!\bigl(-|\log L_W - \log L_D|\bigr)\) is smooth and differentiable everywhere. Its partial derivatives become:
\[
\begin{aligned}
\frac{\partial}{\partial L_W}\exp\!\bigl(-|\log L_W - \log L_D|\bigr) 
&= \\
-\,\exp\!\bigl(-|\log L_W - \log L_D|\bigr) \\
\times \frac{\partial}{\partial L_W}\bigl|\log L_W - \log L_D\bigr|,\\[6pt]
\frac{\partial}{\partial L_D}\exp\!\bigl(-|\log L_W - \log L_D|\bigr)
&= \\
-\,\exp\!\bigl(-|\log L_W - \log L_D|\bigr) \\
 \times \frac{\partial}{\partial L_D}\bigl|\log L_W - \log L_D\bigr|.
\end{aligned}
\]

At the boundary case \(L_W = L_D\), we have \(\lvert \log L_W - \log L_D\rvert = 0\), which implies:
\begin{multline*}
\exp\!\bigl(-|\log L_W - \log L_D|\bigr) = 1, \\
\frac{\partial}{\partial L_W}\exp\!\bigl(-|\log L_W - \log L_D|\bigr) = 0, \\
\frac{\partial}{\partial L_D}\exp\!\bigl(-|\log L_W - \log L_D|\bigr) = 0.
\end{multline*}
Hence, there is no discontinuity in the exponential term at \(L_W = L_D\). It remains differentiable and can be seamlessly incorporated into gradient-based optimization routines.

\subsection{Theoretical Bounds of the WDAL}

To gain insight into the possible range of the Writer-Director Alignment Loss (WDAL),
recall its final form:
\begin{multline*}
L_{\text{WDAL}} = \\
\max\!\bigl(L_W^2, L_D^2\bigr)\,\exp\!\Bigl(-\bigl|\log L_W - \log L_D\bigr|\Bigr).
\end{multline*}
For \(L_W, L_D > 0\), we can show that this simplifies to \(L_W\,L_D\) except at the boundary \(L_W = L_D\) 
(where it still equals \(L_W \, L_D\)). Consequently:
\[
L_{\text{WDAL}} = L_W \, L_D.
\]
provided \(L_W \neq 0\) and \(L_D \neq 0\). Below, we outline the key boundary behaviors.

\paragraph{Lower Bound.}
If either \(L_W \to 0\) or \(L_D \to 0\), then \(L_W \, L_D \to 0\). Hence,
\[
\lim_{\substack{L_W \to 0 \;\text{or}\; L_D \to 0}} L_{\text{WDAL}} = 0.
\]
However, note that \(\log(0)\) is undefined numerically; in practice, one ensures \(L_W\) and \(L_D\) stay positive or uses a small \(\epsilon\) (e.g., \(10^{-8}\)) to avoid taking the log of zero. Still, from a theoretical standpoint, 
\(\boxed{ \min L_{\text{WDAL}} = 0 }\).

\paragraph{Upper Bound.}
As either \(L_W \to \infty\) or \(L_D \to \infty\), the product \(L_W \, L_D \to \infty\). 
Thus, there is no finite upper bound:
\[
\lim_{\substack{L_W \to \infty \;\text{or}\; L_D \to \infty}} L_{\text{WDAL}} = \infty.
\]
Hence, \(\boxed{ L_{\text{WDAL}} \text{ is unbounded above.}}\)

\paragraph{Overall Range.}
Combining these observations, for \(L_W, L_D \geq 0\), the possible values of 
\(L_{\text{WDAL}}\) lie in the interval \([0,\infty)\). 
In most practical NLP applications, neither loss would \emph{exactly} be zero nor unbounded, 
so \(L_{\text{WDAL}}\) typically occupies a finite, positive range. Nonetheless, the flexibility 
to approach 0 or grow arbitrarily large is crucial for reflecting both 
\emph{complete success} (very small losses) and \emph{catastrophic failure} (very large losses).

\section{Theoretical Justification}

\subsection{Conditioning Reduces Entropy}\label{app_sec:proof_theorem_2}

\begin{theorem}
Let \( X \) and \( Y \) be continuous random variables with joint density \( f_{X,Y}(x,y) \), marginal densities \( f_X(x) \), \( f_Y(y) \), and conditional density \( f_{X|Y}(x|y) \). The differential entropy satisfies:

\[
H(X) \geq H(X|Y),
\]

where \( H(X) \) and \( H(X|Y) \) denote the marginal and conditional differential entropy, respectively. \cite{thomas2006elements}

\end{theorem}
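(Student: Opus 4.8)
The plan is to prove that conditioning reduces differential entropy by showing the equivalent statement that mutual information is nonnegative, i.e. $I(X;Y) = H(X) - H(X\mid Y) \ge 0$. First I would write out the definitions explicitly: $H(X) = -\int f_X(x)\log f_X(x)\,dx$, $H(X\mid Y) = -\int\int f_{X,Y}(x,y)\log f_{X\mid Y}(x\mid y)\,dx\,dy$, and recall that $f_{X,Y}(x,y) = f_{X\mid Y}(x\mid y)f_Y(y)$. Subtracting, and using $\int f_{X,Y}(x,y)\,dy = f_X(x)$ to rewrite $H(X)$ as a double integral against the joint density, I would combine the two into a single integral
\[
H(X) - H(X\mid Y) = \int\!\!\int f_{X,Y}(x,y)\,\log\frac{f_{X,Y}(x,y)}{f_X(x)f_Y(y)}\,dx\,dy,
\]
which is exactly the Kullback--Leibler divergence $D_{\mathrm{KL}}\bigl(f_{X,Y}\,\big\|\,f_X f_Y\bigr)$.

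The second step is to show this KL divergence is nonnegative. The cleanest route is Jensen's inequality applied to the convex function $t\mapsto -\log t$ (or equivalently Gibbs' inequality / the log-sum inequality in the continuous form). Concretely, write
\[
-\bigl(H(X) - H(X\mid Y)\bigr) = \int\!\!\int f_{X,Y}(x,y)\,\log\frac{f_X(x)f_Y(y)}{f_{X,Y}(x,y)}\,dx\,dy \le \log\int\!\!\int f_X(x)f_Y(y)\,dx\,dy = \log 1 = 0,
\]
where the inequality is Jensen applied to $\log$ (concave) with respect to the probability measure $f_{X,Y}(x,y)\,dx\,dy$. This yields $H(X) - H(X\mid Y) \ge 0$, which is the claim. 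Equality holds iff $f_{X,Y} = f_X f_Y$ almost everywhere, i.e. iff $X$ and $Y$ are independent.

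Alternatively, and perhaps more in keeping with the DPI framing used earlier in the paper, I would note that this theorem is a direct corollary of the nonnegativity of mutual information, which in turn follows from the same DPI machinery invoked for Theorem~\ref{thm:generator-better-than-pooling}; one can simply cite \cite{thomas2006elements} for $I(X;Y)\ge 0$ and rearrange. The main obstacle is a technical measure-theoretic one rather than conceptual: in the continuous setting one must be careful that the integrals defining the differential entropies are well-defined (not $\pm\infty$ in an indeterminate way) and that the supports align so that $f_{X\mid Y}(x\mid y)$ is defined wherever $f_{X,Y}(x,y) > 0$; absolute continuity of the joint with respect to the product of marginals is what makes the KL-divergence expression legitimate. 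I would state these regularity assumptions up front (finite differential entropies, joint density dominated by the product of marginals) so that the Jensen step is rigorous, and relegate the routine verification to a remark. The detailed argument then mirrors the standard proof and can be deferred to the appendix reference already present in the text.
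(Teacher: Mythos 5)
Your proposal is correct and follows essentially the same route as the paper's proof: both reduce the claim to the non-negativity of $D_{\mathrm{KL}}\bigl(f_{X,Y}\,\|\,f_X f_Y\bigr)$, the paper by first writing $H(X\mid Y)=H(X,Y)-H(Y)$ and deriving subadditivity, you by identifying $H(X)-H(X\mid Y)$ directly with that divergence. The only differences are cosmetic and in your favor: you supply the Jensen argument for the KL non-negativity (which the paper merely invokes), and you note the equality condition and the regularity assumptions needed in the continuous setting.
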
 
\begin{proof}

For continuous random variables, differential entropy is defined as:

\begin{multline*}
H(X) = - \int f_X(x) \log f_X(x) dx,\\
H(X|Y) = - \iint f_{X,Y}(x,y) \log f_{X|Y}(x|y) dx dy.
\end{multline*}

Substituting \( f_{X|Y}(x|y) = \frac{f_{X,Y}(x,y)}{f_Y(y)} \) into \( H(X|Y) \), we derive:

\[
H(X|Y) = - \iint f_{X,Y}(x,y) \log \frac{f_{X,Y}(x,y)}{f_Y(y)} dx dy
\]

Expanding the logarithm:
\small
\begin{multline*}
H(X|Y) = - \underbrace{\iint f_{X,Y}(x,y) \log f_{X,Y}(x,y) \,dxdy}_{H(X,Y)} \\
+ \iint f_{X,Y}(x,y) \log f_Y(y) \,dxdy.
\end{multline*}
\normalsize

The second term simplifies using the marginal \( \int f_{X,Y}(x,y) dx = f_Y(y) \):

\begin{multline*}
\iint f_{X,Y}(x,y) \log f_Y(y) dx dy = \\ \int f_Y(y) \log f_Y(y) dy = - H(Y).
\end{multline*}

Thus,

\[
H(X|Y) = H(X,Y) - H(Y).
\]

To show \( H(X) \geq H(X|Y) \), we invoke the non-negativity of the Kullback-Leibler (KL) divergence:

\begin{multline*}
D_{\text{KL}}(f_{X,Y} \| f_X f_Y) = \\
\iint f_{X,Y}(x,y) \log \frac{f_{X,Y}(x,y)}{f_X(x) f_Y(y)} dx dy \geq 0.
\end{multline*}

Expanding the integrand:

\begin{multline*}
D_{\text{KL}} = \iint f_{X,Y}(x,y) \log f_{X,Y}(x,y) dx dy \\
- \iint f_{X,Y}(x,y) \log f_X(x) dx dy - \\
\iint f_{X,Y}(x,y) \log f_Y(y) dx dy.
\end{multline*}

Recognizing the entropy terms:

\begin{multline*}
D_{\text{KL}} = - H(X,Y) + H(X) + H(Y) \geq 0 \\ 
\implies H(X) + H(Y) \geq H(X,Y).
\end{multline*}

Substituting \( H(X,Y) = H(X|Y) + H(Y) \) into the inequality:

\[
H(X) \geq H(X|Y).
\]

\end{proof}

\subsection{Monotonicity of Conditional Entropy} \label{app_sec:proof_theorem_3}.

\begin{theorem}
Let $X, Y, Z$ be continuous random variables. Differential entropy satisfies:
\[
H(X|Y) \geq H(X|Y,Z),
\]
with equality if $X \perp Z | Y$. This generalizes to 
\begin{multline*}
H(X|Y_1) \geq H(X|Y_1, Y_2) \geq \cdots \\ 
\geq H(X|Y_1, \dots, Y_n).
\end{multline*}

\end{theorem}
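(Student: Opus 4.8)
The plan is to reduce the claimed inequality to the previous theorem (Monotonicity of Conditional Entropy in its two-variable form, together with the preceding theorem that conditioning reduces entropy) and then iterate. First I would observe that the two-variable statement $H(X\mid Y)\ge H(X\mid Y,Z)$ is exactly the content established in the same subsection: writing $H(X\mid Y,Z)=H(X,Z\mid Y)-H(Z\mid Y)$ and using the chain rule $H(X,Z\mid Y)=H(X\mid Y)+H(Z\mid X,Y)$, one gets $H(X\mid Y)-H(X\mid Y,Z)=H(Z\mid Y)-H(Z\mid X,Y)=I(X;Z\mid Y)\ge 0$, where the nonnegativity of conditional mutual information follows from applying the ``conditioning reduces entropy'' theorem conditionally (i.e. pointwise in the conditioning variable $Y$ and then integrating), or equivalently from nonnegativity of a conditional KL divergence. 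Equality holds precisely when $I(X;Z\mid Y)=0$, i.e. $X\perp Z\mid Y$, which gives the stated equality condition.

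Next I would promote the single-step inequality to the chain $H(X\mid Y_1)\ge H(X\mid Y_1,Y_2)\ge\cdots\ge H(X\mid Y_1,\dots,Y_n)$ by induction on the number of conditioning variables. The base case is the two-variable inequality just proved. For the inductive step, I would apply that same inequality with the bundled variable $\tilde Y=(Y_1,\dots,Y_{k-1})$ playing the role of $Y$ and $Y_k$ playing the role of $Z$, which yields $H(X\mid Y_1,\dots,Y_{k-1})\ge H(X\mid Y_1,\dots,Y_{k-1},Y_k)$; chaining these inequalities for $k=2,\dots,n$ gives the full descending chain. I would remark that bundling several continuous random variables into a single vector-valued random variable is harmless here since differential entropy and the identities used are stated for arbitrary (including vector-valued) continuous random variables.

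The main obstacle is the careful handling of the conditional-mutual-information / conditional-entropy manipulations in the continuous (differential entropy) setting: one must ensure all the joint, marginal, and conditional densities exist and that the relevant integrals (and hence the chain-rule identities $H(X,Z\mid Y)=H(X\mid Y)+H(Z\mid X,Y)$ and $H(X\mid Y,Z)=H(X,Z\mid Y)-H(Z\mid Y)$) are well-defined and finite, so that rearrangements do not produce indeterminate $\infty-\infty$ forms. I would address this by invoking the standing assumption that the variables admit well-behaved joint densities (as in the previous theorem's hypotheses) and noting that the argument is otherwise identical to the discrete case via the nonnegativity of (conditional) relative entropy; under these regularity conditions the decomposition $H(X\mid Y)-H(X\mid Y,Z)=D_{\mathrm{KL}}\!\bigl(f_{X,Z\mid Y}\,\|\,f_{X\mid Y}f_{Z\mid Y}\bigr)\ge 0$ (integrated over $Y$) makes both the inequality and its equality condition transparent.
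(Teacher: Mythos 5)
Your proof is correct, but it takes a different route from the paper's. The paper proves the single-step inequality directly: it writes $f_{X|Y}(x|y)=\int f_{X|Y,Z}(x|y,z)\,f_{Z|Y}(z|y)\,dz$, substitutes this mixture form into the defining integral of $H(X\mid Y)$, and applies Jensen's inequality to the convex function $-\log(\cdot)$ to bound $H(X\mid Y)$ below by $H(X\mid Y,Z)$; the chain for $Y_1,\dots,Y_n$ is then obtained by iterating, exactly as you do. You instead reduce the gap to a conditional mutual information via the chain-rule identities $H(X,Z\mid Y)=H(X\mid Y)+H(Z\mid X,Y)$ and $H(X\mid Y,Z)=H(X,Z\mid Y)-H(Z\mid Y)$, and invoke nonnegativity of the conditional KL divergence $D_{\mathrm{KL}}\bigl(f_{X,Z\mid Y}\,\|\,f_{X\mid Y}f_{Z\mid Y}\bigr)$. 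The two arguments are essentially equivalent in substance (the paper's Jensen step is the same convexity fact that underlies nonnegativity of relative entropy), but your decomposition has the advantage of delivering the equality condition $X\perp Z\mid Y$ transparently as $I(X;Z\mid Y)=0$, which the paper asserts but does not derive, whereas the paper's route avoids rearranging entropy identities and so needs slightly less care about finiteness of the intermediate terms (a caveat you correctly flag and address with a regularity assumption). Your bundling of $(Y_1,\dots,Y_{k-1})$ into a single vector-valued conditioning variable for the induction is also sound and matches the paper's iterative step.
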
 

\begin{proof}
    
The conditional differential entropies are defined as:
\begin{multline*}
H(X|Y) = - \iint f_Y(y) f_{X|Y}(x|y) \\
\log f_{X|Y}(x|y) \,dxdy,
\end{multline*}
\begin{multline*}
H(X|Y,Z) = - \iiint f_{Y,Z}(y,z) f_{X|Y,Z}(x|y,z) \\ 
\log f_{X|Y,Z}(x|y,z) \,dxdydz.
\end{multline*}

The marginal conditional density relates to the joint via:
\begin{multline*}
f_{X|Y}(x|y) = \int f_{X|Y,Z}(x|y,z) f_{Z|Y}(z|y) \,dz.
\end{multline*}

Substituting this into $H(X|Y)$:
\begin{multline*}
H(X|Y) = - \iint f_Y(y) . \\
\underbrace{\left[ \int f_{X|Y,Z}(x|y,z) f_{Z|Y}(z|y) \,dz \right] }_{\textstyle f_{X|Y}(x|y)} \\
\log \left( \int f_{X|Y,Z}(x|y,z) f_{Z|Y}(z|y) \,dz \right) \,dxdy.
\end{multline*}

\textit{Apply Jensen's inequality}

Using the convexity of $-\log(\cdot)$ and Jensen's inequality:
\begin{multline*}
-\log \left( \int f_{X|Y,Z}(x|y,z) f_{Z|Y}(z|y) \,dz \right) \leq \\
-\int f_{Z|Y}(z|y) \log f_{X|Y,Z}(x|y,z) \,dz.
\end{multline*}

Substituting this bound:
\begin{multline*}
H(X|Y) \geq - \iiint f_Y(y) f_{Z|Y}(z|y) \\
f_{X|Y,Z}(x|y,z) \log f_{X|Y,Z}(x|y,z) \,dxdydz.
\end{multline*}

Simplifying via $f_Y(y) f_{Z|Y}(z|y) = f_{Y,Z}(y,z)$:
\begin{multline*}
H(X|Y) \geq - \iiint f_{Y,Z}(y,z) \\ 
f_{X|Y,Z}(x|y,z) \log f_{X|Y,Z}(x|y,z) \,dxdydz = \\
H(X|Y,Z).
\end{multline*}

\textbf{Generalization.} Iteratively applying this result gives:
\begin{multline*}
H(X|Y_1) \geq H(X|Y_1, Y_2) \geq \cdots \\
\geq H(X|Y_1, \dots, Y_n).
\end{multline*}
\end{proof}

\subsection{Mutual Information Decreases Under Deterministic Compression}\label{app_sec:proof_theorem_1}

Let \( \mathbf{X} \in \mathbb{R}^{n \times d} \) denote the input with \( n \) tokens and \( d \)-dimensional embeddings, and let \( \mathbf{Z} \in \mathbb{R}^{n \times d} \) represent the final hidden representation. Here \( \mathbf{Z_p} \in \mathbb{R}^{d}\) as a pooled representation. $\mathbf{Y}$ is output.

We aim to show that the generator provides better predictions than the predictor, i.e., 
\begin{equation} I(\mathbf{Y}; \mathbf{Z}) \geq I(\mathbf{Y}; \mathbf{Z_p}).
\end{equation}

According to  the Data Processing Inequality (DPI)~\cite{beaudry2011intuitive}, this process of predictor can be expressed as
\begin{equation}
\mathbf{X}  \to \mathbf{Z} \to \mathbf{Z_p} \to \mathbf{Y}.
\end{equation}

Similarly for the generator
\begin{equation}
\mathbf{X}  \to \mathbf{Z} \to \mathbf{Y}.
\end{equation}

Now, from the definition of mutual information in the Information Bottleneck (IB) framework, we know
\begin{equation}\label{eq::mutual_Z_p}
I(\mathbf{Y}; \mathbf{Z_p}) = H(\mathbf{Y}) - H(\mathbf{Y}|\mathbf{Z_p}),
\end{equation}
and similarly
\begin{equation} \label{eq::mutual_Z}
I(\mathbf{Y}; \mathbf{Z}) = H(\mathbf{Y}) - H(\mathbf{Y}|\mathbf{Z}).
\end{equation}

To prove that the the generator is  better than the predictor in predicting $\mathbf{Y}$, we need to show 
\begin{equation}
I(\mathbf{Y}; \mathbf{Z}) \geq I(\mathbf{Y}; \mathbf{Z_p}).
\end{equation}

From \eqref{eq::mutual_Z_p} and \eqref{eq::mutual_Z}, this reduces to proving
\[
 H(\mathbf{Y}|\mathbf{Z}) \leq H(\mathbf{Y}|\mathbf{Z_p}).
\]

Now in order to proof this, we can consider two real scenario how we are getting $\mathbf{Z_p}$. One  common practice is  pooling where output $\mathbf{Z_p}$ is considered the first special token representation of model and another case is $\mathbf{Z_p}$ which will be considered as the means of all tokens representation. 

\textbf{When \(\mathbf{Z}_p\) is the representation of the first special token:}\\
In this case, we can define
\[
  \mathbf{Z} \;=\; \bigl(\mathbf{Z}_p,\, \mathbf{Z}_1,\, \mathbf{Z}_2,\, \dots,\, \mathbf{Z}_{n-1}\bigr) \;\in\; \mathbb{R}^{n \times d}.
\]
Then we have
\[
  H(\mathbf{Y}\,\big\vert\, \mathbf{Z})
  \;=\;
  H\bigl(\mathbf{Y}\,\big\vert\, \mathbf{Z}_p, \mathbf{Z}_1, \mathbf{Z}_2, \dots, \mathbf{Z}_{n-1}\bigr).
\]

 Now following  the principle of conditional entropy and the previous theorem (\ref{app_sec:proof_theorem_2} and \ref{app_sec:proof_theorem_3}):
\begin{multline*}
H(X|Y_1, Y_2, \dots, Y_n) \leq H(X|Y_1, Y_2, \dots, Y_{n-1}) \\
\leq \dots \leq H(X|Y_1),
\end{multline*}
it follows that:
\[  H(\mathbf{Y}| \mathbf{Z_p}, \mathbf{Z_1}, \mathbf{Z_2}, \cdots \mathbf{Z_{n-1}}) \leq H(\mathbf{Y}|\mathbf{Z_p}). \]
Therefore:
\[  H(\mathbf{Y}|\mathbf{Z}) \leq H(\mathbf{Y}|\mathbf{Z_p}). \]

Combining this result with the mutual information definitions, we conclude:
\[ I(\mathbf{Y}; \mathbf{Z}) \geq I(\mathbf{Y}; \mathbf{Z_p}). \]

\textbf{When $\mathbf{Z_p}$ is the mean of $\mathbf{Z}$:} From the definition of conditional entropy
\begin{multline*}
  H(\mathbf{Y} \mid \mathbf{Z}) 
  \;=\; \\ \mathbb{E}_{p(\mathbf{Z})} \Bigl[ -\!\!\int p(\mathbf{Y} \mid \mathbf{Z})\,\log p(\mathbf{Y} \mid \mathbf{Z})\,d\mathbf{Y} \Bigr].
\end{multline*}
\normalsize
Similarly,
\begin{multline*}
H(\mathbf{Y} \mid \mathbf{Z}_p) 
= \\ \mathbb{E}_{p(\mathbf{Z}_p)} \Bigl[ 
    -\!\!\int p(\mathbf{Y} \mid \mathbf{Z}_p) 
    \log p(\mathbf{Y} \mid \mathbf{Z}_p) 
    d\mathbf{Y} 
\Bigr].
\end{multline*}

Hence, we need to show
\begin{multline*}
  \mathbb{E}_{p(\mathbf{Z}_p)} \Bigl[ -\!\!\int p(\mathbf{Y} \mid \mathbf{Z}_p)\,\log p(\mathbf{Y} \mid \mathbf{Z}_p)\,d\mathbf{Y} \Bigr]
  \;\;\ge\;\; \\
  \mathbb{E}_{p(\mathbf{Z})} \Bigl[ -\!\!\int p(\mathbf{Y} \mid \mathbf{Z})\,\log p(\mathbf{Y} \mid \mathbf{Z})\,d\mathbf{Y} \Bigr].
\end{multline*}

By the law of total probability (marginalization), for each fixed value \(\mathbf{z}_p\) of \(\mathbf{Z}_p\):
\begin{multline*}
  p(\mathbf{Y} \mid \mathbf{Z}_p = \mathbf{z}_p)
  \;=\; \\
  \int p(\mathbf{Y} \mid \mathbf{Z})\,p(\mathbf{Z} \mid \mathbf{Z}_p = \mathbf{z}_p)\,d\mathbf{Z}.
\end{multline*}
Hence 
\[
  p(\mathbf{Y} \mid \mathbf{Z}_p) 
  \;=\; 
  \int p(\mathbf{Y} \mid \mathbf{Z})\,p(\mathbf{Z} \mid \mathbf{Z}_p)\,d\mathbf{Z}.
\]
Substitute the above mixture form into the conditional entropy expression:
\begin{multline*}
H(\mathbf{Y} \mid \mathbf{Z}_p) 
= \\
\mathbb{E}_{p(\mathbf{Z}_p)} \Bigl[
    -\!\!\int 
      p(\mathbf{Y} \mid \mathbf{Z}_p)\, 
      \log p(\mathbf{Y} \mid \mathbf{Z}_p)\,
    d\mathbf{Y}
\Bigr] 
\\[6pt]
= \mathbb{E}_{p(\mathbf{Z}_p)} \Bigl[
    -\!\!\int 
      \Bigl(\int p(\mathbf{Y} \mid \mathbf{Z})\, 
      p(\mathbf{Z} \mid \mathbf{Z}_p)\,
    d\mathbf{Z}\Bigr) \\
    \log \Bigl(\int p(\mathbf{Y} \mid \mathbf{Z})\, 
      p(\mathbf{Z} \mid \mathbf{Z}_p)\,
    d\mathbf{Z}\Bigr)
    d\mathbf{Y}
\Bigr].
\end{multline*}

Note that the function \(\,-x \log x\) is \emph{concave} for \(x > 0\). 
Equivalently, the Shannon entropy
\[
  H(p) \;=\; -\!\int p(\mathbf{y})\,\log p(\mathbf{y})\,d\mathbf{y}
\]
is a \emph{concave functional} in \(p\).  
Therefore, for the mixture 
\(\,\int p(\mathbf{Y} \mid \mathbf{Z})\,p(\mathbf{Z} \mid \mathbf{Z}_p)\,d\mathbf{Z},\)
we have:
\small
\begin{multline*}
-\!\int
    \Bigl(\!\int p(\mathbf{Y} \mid \mathbf{Z})\,p(\mathbf{Z} \mid \mathbf{Z}_p)\,d\mathbf{Z} \Bigr) \\
    \log 
    \Bigl(\!\int p(\mathbf{Y} \mid \mathbf{Z})\,p(\mathbf{Z} \mid \mathbf{Z}_p)\,d\mathbf{Z} \Bigr)
  \, d\mathbf{Y} 
\\[6pt]
\ge
  \iint p(\mathbf{Z} \mid \mathbf{Z}_p)\,p(\mathbf{Y} \mid \mathbf{Z}) \,
  \Bigl[-\,\log p(\mathbf{Y} \mid \mathbf{Z})\Bigr]
  \,d\mathbf{Y}\,d\mathbf{Z}
\\[6pt]
=
  \int p(\mathbf{Z} \mid \mathbf{Z}_p)\Bigl[
    -\!\!\int p(\mathbf{Y} \mid \mathbf{Z})\,\log p(\mathbf{Y} \mid \mathbf{Z})\,d\mathbf{Y}
  \Bigr] d\mathbf{Z}.
\end{multline*}
\normalsize

Thus, inside the expectation over \(\mathbf{Z}_p\), we get using Jensen's inequality:
\small
\begin{multline*}
-\!\!\int 
    p(\mathbf{Y} \mid \mathbf{Z}_p)\,\log p(\mathbf{Y} \mid \mathbf{Z}_p)
  \, d\mathbf{Y}
\\[6pt]
\ge
  \int p(\mathbf{Z} \mid \mathbf{Z}_p) 
    \Bigl[
      -\!\!\int p(\mathbf{Y} \mid \mathbf{Z})\,\log p(\mathbf{Y} \mid \mathbf{Z})\,d\mathbf{Y}
    \Bigr] 
  d\mathbf{Z}.
\end{multline*}
\normalsize

Taking the expectation w.r.t.\ \(p(\mathbf{Z}_p)\) then yields
\begin{multline*}
H(\mathbf{Y} \mid \mathbf{Z}_p)
= \mathbb{E}_{p(\mathbf{Z}_p)} \Bigl[
    -\!\!\int 
      p(\mathbf{Y} \mid \mathbf{Z}_p)\,\\
      \log p(\mathbf{Y} \mid \mathbf{Z}_p)\,d\mathbf{Y}
  \Bigr]
\\[5pt]
\ge 
\mathbb{E}_{p(\mathbf{Z}_p)} \Bigl[
    \int p(\mathbf{Z} \mid \mathbf{Z}_p)\Bigl(
      -\!\!\int p(\mathbf{Y} \mid \mathbf{Z})\, \\
      \log p(\mathbf{Y} \mid \mathbf{Z})\,d\mathbf{Y}
    \Bigr) d\mathbf{Z}
  \Bigr].
\end{multline*}

Using the law of total expectation, we get
\small
\begin{multline*}
\int p(\mathbf{Z} \mid \mathbf{Z}_p)\Bigl[
    -\!\!\int p(\mathbf{Y} \mid \mathbf{Z})\,\log p(\mathbf{Y} \mid \mathbf{Z})\,d\mathbf{Y}
  \Bigr] d\mathbf{Z}
\\
= 
\mathbb{E}_{p(\mathbf{Z} \mid \mathbf{Z}_p)}\Bigl[ H(\mathbf{Y} \mid \mathbf{Z}) \Bigr].
\end{multline*}
\normalsize
Hence
\begin{multline*}
H(\mathbf{Y} \mid \mathbf{Z}_p)
\;\;\ge\;\;
\mathbb{E}_{p(\mathbf{Z}_p)} \Bigl[
    \mathbb{E}_{p(\mathbf{Z} \mid \mathbf{Z}_p)}\bigl[H(\mathbf{Y} \mid \mathbf{Z})\bigr]
  \Bigr]
\\
\;=\;
\mathbb{E}_{p(\mathbf{Z})} \bigl[H(\mathbf{Y} \mid \mathbf{Z})\bigr].
\end{multline*}

where the last equality uses the law of total expectation 
(\(\mathbb{E}_{\mathbf{Z}_p}[\mathbb{E}_{\mathbf{Z} \mid \mathbf{Z}_p}(\cdot)] = \mathbb{E}_{\mathbf{Z}}[\cdot]\)) 
and the fact that \(\mathbf{Z}_p\) is a deterministic function of \(\mathbf{Z}\).  
Thus,
\[
  H(\mathbf{Y} \mid \mathbf{Z}_p)
  \;\;\ge\;\;
  H(\mathbf{Y} \mid \mathbf{Z}).
\]

Combining this result with the mutual information definitions, we conclude:
\[ I(\mathbf{Y}; \mathbf{Z}) \geq I(\mathbf{Y}; \mathbf{Z_p}). \]

\begin{figure}[!t]
    \centering
    \includegraphics[width=0.9980\linewidth]{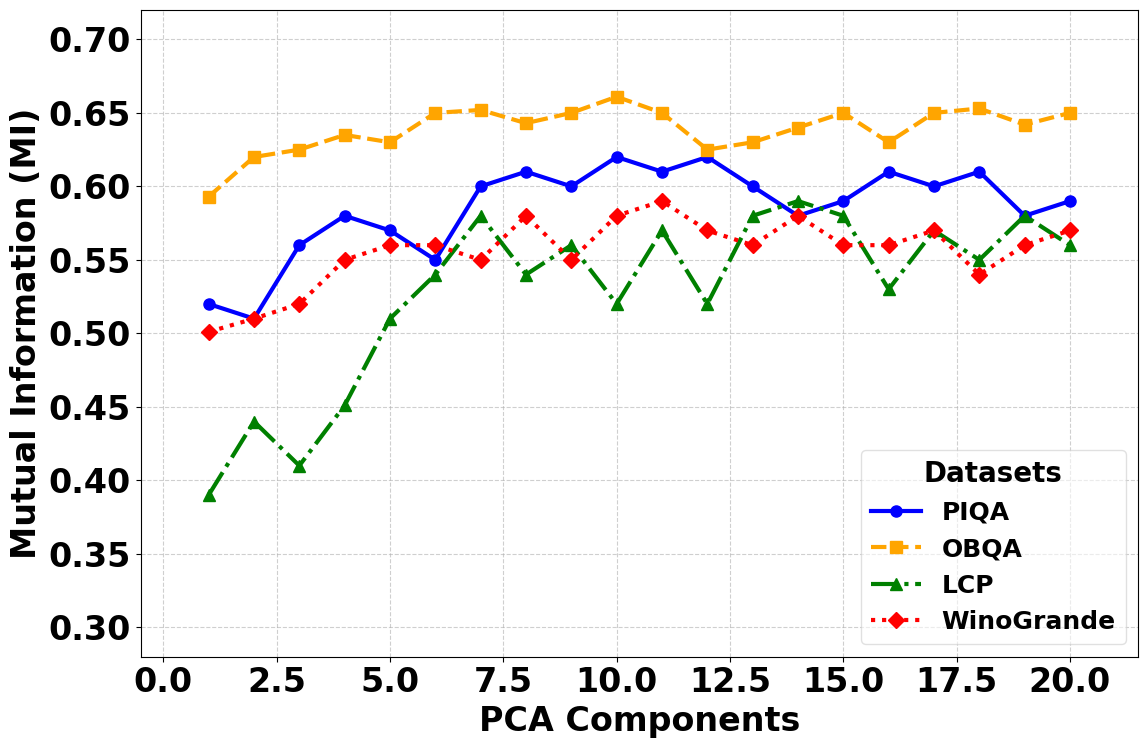}
\caption{Effect of PCA-based dimensionality reduction on mutual information.}
\label{fig:PCA_MI}
\end{figure}

\section{More Ablation Studies} \label{appen:abalation}
\subsection{PCA Components vs. Mutual Information:}

To estimate the mutual information for generation (as discussed in Section~\ref{sec:generation-superior}), we apply Principal Component Analysis (PCA) to reduce the dimensionality of the token representations. Instead of using the full representation of \(\mathbf{Z}\) (which has dimensions \(n \times d\)), we reduce it to \(k\) PCA components. This helps lower the computational cost while still retaining the most important information.

Figure~\ref{fig:PCA_MI} shows how the number of PCA components affects mutual information estimation. Initially, as we increase the number of components, the mutual information improves because more task-relevant details are preserved. However, after about 10 to 12 components, the mutual information levels off and adding more components doesn't significantly improve the results. For example, in the OBQA dataset, mutual information increases from 0.593 with just 1 component to 0.661 with 10 components, but stabilizes beyond that point. We observe similar patterns across other datasets.

\section{Implementation and Hyperparameter Details}
\label{sec:impl}

We design the \emph{task adapter} $\mathcal{T}$ in \textsc{PredGen} to map the model-generated token sequence $\tilde{\mathbf{Y}} = [\tilde{\mathbf{Y}}_1, \tilde{\mathbf{Y}}_2, \dots, \tilde{\mathbf{Y}}_m]$ into the final prediction $\tilde{\mathbf{P}}$. Below, we distinguish classification and regression setups.

\subsection{Classification Task Adapter.}
For classification, let $\tilde{\mathbf{Y}}$ be the autoregressively generated tokens. We then define
\[
   \tilde{\mathbf{P}} \;=\; \mathcal{T}(\tilde{\mathbf{Y}}) \;=\; \mathrm{softmax}\Bigl(W\,\bigl[\mathrm{CLS}(\tilde{\mathbf{Y}})\bigr]\Bigr),
\]
where $W$ is a trainable parameter matrix and $\mathrm{CLS}(\cdot)$ denotes extracting a  “classification” embedding. We use the standard cross-entropy loss $\mathcal{L}_{\text{CE}}$ for both the generator (token-level) and the classifier (label-level). Since both losses live in the same space (cross-entropy), our Writer-Director Alignment Loss (\textsc{WDAL}) remains stable.

\subsection{Regression Task Adapter with Ordered Penalty.}
When the final output must be a continuous value (e.g., $0.75$ in a Semantic Textual Similarity task), representing it as tokens \texttt{[0, ., 7, 5]} can introduce ambiguity: cross-entropy penalizes each token without directly encoding how numerically close $\tilde{\mathbf{Y}}$ is to $\mathbf{Y}$. To address this, we introduce an \emph{ordered penalty} that magnifies errors in more significant token positions.

Formally, let $\mathbf{Y} = [\mathbf{Y}_1, \mathbf{Y}_2, \dots, \mathbf{Y}_m]$ be the ground-truth token sequence (e.g., digits of a decimal number) and $\tilde{\mathbf{Y}} = [\tilde{\mathbf{Y}}_1, \tilde{\mathbf{Y}}_2, \dots, \tilde{\mathbf{Y}}_m]$ be the predicted tokens. Define a monotonically decreasing penalty vector 
\[
   \boldsymbol{\alpha} \;=\; [\alpha_1, \alpha_2, \dots, \alpha_m]
\]
\[
   \quad \text{where } \alpha_1 \ge \alpha_2 \ge \dots \ge \alpha_m > 0.
\]
Each position $i$ is then weighted by $\alpha_i$ to reflect its significance. We modify the token-level cross-entropy $\mathcal{L}_{\text{CE}}( \tilde{y}_i, y_i )$ to:
\[
  L_D =  \mathcal{L}_{\text{ord}}\bigl(\tilde{\mathbf{Y}}, \mathbf{Y}\bigr)
   \;=\;
   \sum_{i=1}^{m} \alpha_i \,\mathcal{L}_{\text{CE}}\bigl(\tilde{\mathbf{Y}}_i, \mathbf{Y}_i \bigr).
\]
In practice, we choose $\boldsymbol{\alpha}$ so that early-token mismatches (e.g., the integer or first decimal digit) incur substantially larger penalties compared to later tokens. For example, if $\mathbf{Y}$ has length $4$, we might set $\alpha_1=1.67, \alpha_2=1.33, \alpha_3=1.01, \alpha_4=1.00$, ensuring that predicting \texttt{0.76} (only off by one digit in the third position) has lower cost than predicting \texttt{1.75} (off in the first digit).

\subsection{WDAL Consistency.}
Both the “writer” (generative cross-entropy) and the “director” (classification or regression output) losses must be consistently derived from the same token sequence to maintain stability in WDAL. For classification, this alignment is direct since both losses are cross-entropy. For regression, the ordered penalty $\mathcal{L}_{\text{ord}}$—which is still a sum of token-level cross-entropies with position-dependent scaling—remains compatible with the generative objective, preventing conflicts that arise from mixing cross-entropy (token-level) and MSE/MAE (numerical-value-level).

In all cases, hyperparameters (e.g., learning rate, penalty coefficients $\alpha_i$, and scheduling parameters for training) are tuned to optimize final validation performance while preserving the coherence of generation and prediction. The hyperparameters details of datasets are given in Table~\ref{tab:hyperparameters}.

\begin{table*}[ht]
\centering
\resizebox{\textwidth}{!}{%
\begin{tabular}{l|c|c|c|c|c|c|c}
\hline
\rowcolor{gray!20}
\textbf{Dataset} & \textbf{Learning Rate} & \textbf{Batch Size} & \textbf{Grad. Accum. Steps} & \textbf{Epochs} & \textbf{Warmup Steps} & \textbf{LR Scheduler Type} & \textbf{Max Steps for Sampling} \\
\hline
\rowcolor[HTML]{EAF3FA} \textbf{CLEAR} & 2e-3 & 3 & 4 & 25 & 200 & cosine & 500 \\
\rowcolor[HTML]{EAF3FA} \textbf{Humicroedit} & 2e-3 & 6 & 4 & 10 & 200 & cosine & 500 \\
\rowcolor[HTML]{EAF3FA} \textbf{LCP} & 2e-3 & 1 & 5 & 10 & 200 & cosine & 500 \\
\rowcolor[HTML]{EAF3FA} \textbf{SICK} & 2e-3 & 6 & 4 & 10 & 200 & cosine & 500 \\
\rowcolor[HTML]{EAF3FA} \textbf{stsb} & 2e-3 & 5 & 4 & 20 & 200 & cosine & 500 \\
\rowcolor[HTML]{EAF3FA} \textbf{WASSA} & 2e-3 & 5 & 4 & 10 & 200 & cosine & 500 \\
\rowcolor[HTML]{F8EDEB} \textbf{BoolQ} & 2e-3 & 10 & 3 & 10 & 500 & cosine & 500 \\
\rowcolor[HTML]{F8EDEB} \textbf{PIQA} & 2e-3 & 10 & 3 & 10 & 500 & cosine & 1000 \\
\rowcolor[HTML]{F8EDEB} \textbf{SIQA} & 2e-3 & 10 & 3 & 10 & 500 & cosine & 500 \\
\rowcolor[HTML]{F8EDEB} \textbf{HellaSwag} & 2e-3 & 10 & 3 & 10 & 500 & cosine & 1000 \\
\rowcolor[HTML]{F8EDEB} \textbf{WinoGrande} & 2e-3 & 10 & 3 & 10 & 500 & cosine & 1000 \\
\rowcolor[HTML]{F8EDEB} \textbf{ARC-e} & 2e-3 & 10 & 3 & 10 & 500 & cosine & 500 \\
\rowcolor[HTML]{F8EDEB} \textbf{ARC-c} & 2e-3 & 10 & 3 & 10 & 500 & cosine & 500 \\
\rowcolor[HTML]{F8EDEB} \textbf{OBQA} & 2e-3 & 10 & 3 & 10 & 500 & cosine & 1000 \\
\hline
\end{tabular}%
}
\caption{Hyperparameters for Different Datasets}
\label{tab:hyperparameters}
\end{table*}

\section{Dataset Description} \label{sec:dataset}

The details of the used datasets in this work have been described in Table~\ref{tab:datasets}.

\begin{table*}[ht]
\centering
\resizebox{\textwidth}{!}{%
\begin{tabular}{l|l|l|p{9cm}}
\hline
\rowcolor{gray!20} 
\textbf{Dataset} & \textbf{Task Type} & \textbf{Domain} & \textbf{Description} \\
\hline
\rowcolor[HTML]{EAF3FA} \textbf{BoolQ} \cite{clark2019boolq} & Classification & Reading Comprehension & A binary question-answering dataset where each naturally occurring question requires a yes/no response based on a Wikipedia passage. \\
\rowcolor[HTML]{EAF3FA} \textbf{PIQA} \cite{bisk2020piqa} & Classification & Physical Commonsense & Evaluates models' ability to perform physical reasoning by selecting the most plausible solution to a given problem scenario. \\
\rowcolor[HTML]{EAF3FA} \textbf{SIQA} \cite{sap2019socialiqa} & Classification & Social Intelligence & Tests reasoning about human intentions, emotions, and social interactions by predicting the most likely response to a given situation. \\
\rowcolor[HTML]{EAF3FA} \textbf{HellaSwag} \cite{zellers2019hellaswag} & Classification & Commonsense Reasoning & Challenges models to predict the most logically coherent continuation of a given textual scenario. \\
\rowcolor[HTML]{EAF3FA} \textbf{WinoGrande} \cite{sakaguchi2021winogrande} & Classification & Coreference Resolution & A large-scale dataset designed to assess pronoun resolution by leveraging commonsense knowledge. \\
\rowcolor[HTML]{EAF3FA} \textbf{ARC-e} \cite{clark2018think} & Classification & Science QA & The \textit{easy} subset of the AI2 Reasoning Challenge (ARC), featuring relatively straightforward multiple-choice science questions. \\
\rowcolor[HTML]{EAF3FA} \textbf{ARC-c} \cite{clark2018think} & Classification & Science QA & The \textit{challenge} subset of ARC, containing more difficult questions requiring complex reasoning and external knowledge. \\
\rowcolor[HTML]{EAF3FA} \textbf{OBQA} \cite{mihaylov2018can} & Classification & Open-Book QA & Evaluates the ability to answer multiple-choice science questions using a predefined set of knowledge facts and external world knowledge. \\
\rowcolor[HTML]{F8EDEB} \textbf{WASSA} \cite{vinayakumar2017deepcybernet} & Regression & Sentiment Analysis & A dataset for emotion intensity and sentiment prediction, focusing on fine-grained sentiment classification. \\
\rowcolor[HTML]{F8EDEB} \textbf{SICK} \cite{marelli-etal-2014-sick} & Regression & Semantic Similarity & The Sentences Involving Compositional Knowledge (SICK) dataset, used for sentence similarity assessment and textual entailment tasks. \\
\rowcolor[HTML]{F8EDEB} \textbf{STSB} \cite{cer2017semeval} & Regression & Sentence Similarity & A benchmark for measuring semantic textual similarity, where sentences are scored based on their degree of similarity. \\
\rowcolor[HTML]{F8EDEB} \textbf{LCP} \cite{shardlow2020complex} & Regression & Lexical Complexity & A dataset designed to predict the perceived complexity of words in context. \\
\rowcolor[HTML]{F8EDEB} \textbf{CLEAR} \cite{crossley2023large} & Regression & Reasoning Complexity & Evaluates the difficulty level of reasoning tasks, quantifying cognitive complexity in natural language understanding. \\
\rowcolor[HTML]{F8EDEB} \textbf{Humicroedit} \cite{hossain2019president} & Regression & Humor Perception & A dataset from SemEval that assesses humor perception by analyzing minor text modifications (micro-edits). \\
\rowcolor[HTML]{DFF0D8} \textbf{MultiArith} \cite{roy2015reasoning} & Arithmetic & Mathematics & A dataset focusing on multi-step arithmetic word problems requiring reasoning across multiple operations. \\
\rowcolor[HTML]{DFF0D8} \textbf{GSM8K} \cite{cobbe2021training} & Arithmetic & Mathematics & A high-quality dataset of challenging grade school math word problems for benchmarking reasoning capabilities. \\
\rowcolor[HTML]{DFF0D8} \textbf{AddSub} \cite{hosseini2014learning} & Arithmetic & Mathematics & A dataset designed for evaluating models' ability to solve arithmetic word problems involving addition and subtraction. \\
\rowcolor[HTML]{DFF0D8} \textbf{SingleEq} \cite{koncel2015parsing} & Arithmetic & Mathematics & Contains single-equation arithmetic problems where the goal is to predict the correct numerical result. \\
\rowcolor[HTML]{DFF0D8} \textbf{SVAMP} \cite{patel2021nlp} & Arithmetic & Mathematics & A dataset that introduces variations in arithmetic problems to test the robustness of mathematical reasoning models. \\
\hline
\end{tabular}%
}
\caption{Overview of benchmark datasets used for classification, regression, and arithmetic reasoning tasks.}
\label{tab:datasets}
\end{table*}

\section{Additional Experiments}

Table~\ref{tab:math_performance} presents the results of our math reasoning evaluation. Following \citet{hu2023llm}, we first train on the \textit{math-10k} dataset and then evaluate performance on various math reasoning benchmarks. The reported results use an exact match criterion, where a prediction is considered correct if the difference from the ground truth is less than 0.0001.

We observe that traditional pooling-based classification performs poorly, primarily because it relies on a classifier layer to predict numerical values. Since classification models are not optimized for generating precise numerical outputs, achieving an exact match is extremely rare. In contrast, PredGen consistently achieves higher accuracy across all benchmarks, demonstrating that token-level generation better preserves numerical precision and reasoning ability.

\begin{table*}[htp!]
\centering
\resizebox{\textwidth}{!}{%
\begin{tabular}{|l|l|l|c|c|c|c|c|c|c|c|}
\rowcolor{gray!20}
\textbf{Model}     & \textbf{PEFT}    & \textbf{Method}    & \textbf{MultiArith} & \textbf{GSM8K} & \textbf{AddSub} & \textbf{SingleEq} & \textbf{SVAMP} & \textbf{Avg.}\\ \hline
\multirow{12}{*}{\textbf{Mistral-7B}} 
                    & \cellcolor[HTML]{EAF3FA}      MELoRA              & \cellcolor[HTML]{EAF3FA}    Predictor           & \cellcolor[HTML]{EAF3FA}    5.430          & \cellcolor[HTML]{EAF3FA}    0.032          & \cellcolor[HTML]{EAF3FA}    2.890           & \cellcolor[HTML]{EAF3FA}    3.710               & \cellcolor[HTML]{EAF3FA}    0.986    & \cellcolor[HTML]{EAF3FA}    2.610  \\ 
                    & \cellcolor[HTML]{EAF3FA}                 & \cellcolor[HTML]{EAF3FA}    Generator           & \cellcolor[HTML]{EAF3FA}    75.22           & \cellcolor[HTML]{EAF3FA}    40.22          & \cellcolor[HTML]{EAF3FA}    71.64          & \cellcolor[HTML]{EAF3FA}    71.73               & \cellcolor[HTML]{EAF3FA}    57.12  & \cellcolor[HTML]{EAF3FA}    63.19  \\ 
                    & \cellcolor[HTML]{D0E7F7}                    & \cellcolor[HTML]{D0E7F7}    PredGen           & \cellcolor[HTML]{D0E7F7}    76.47           & \cellcolor[HTML]{D0E7F7}    42.59          & \cellcolor[HTML]{D0E7F7}    73.43          & \cellcolor[HTML]{D0E7F7}    72.48               & \cellcolor[HTML]{D0E7F7}    58.92  & \cellcolor[HTML]{D0E7F7}    64.78\\ 
                    & \cellcolor[HTML]{EAF3FA}    LoRA-FA           & \cellcolor[HTML]{EAF3FA}    Predictor           & \cellcolor[HTML]{EAF3FA}    4.530           & \cellcolor[HTML]{EAF3FA}    0.030          & \cellcolor[HTML]{EAF3FA}    2.170         & \cellcolor[HTML]{EAF3FA}    3.890              & \cellcolor[HTML]{EAF3FA}    1.030  & \cellcolor[HTML]{EAF3FA}    2.330  \\ 
                    & \cellcolor[HTML]{EAF3FA}                      & \cellcolor[HTML]{EAF3FA}    Generator           & \cellcolor[HTML]{EAF3FA}    75.83          & \cellcolor[HTML]{EAF3FA}    40.11         & \cellcolor[HTML]{EAF3FA}    73.43         & \cellcolor[HTML]{EAF3FA}    70.37              & \cellcolor[HTML]{EAF3FA}    56.84     & \cellcolor[HTML]{EAF3FA}     63.32   \\ 
                    & \cellcolor[HTML]{D0E7F7}                     & \cellcolor[HTML]{D0E7F7}    PredGen            & \cellcolor[HTML]{D0E7F7}    77.53           & \cellcolor[HTML]{D0E7F7}    42.48          & \cellcolor[HTML]{D0E7F7}    75.14          & \cellcolor[HTML]{D0E7F7}    72.48               & \cellcolor[HTML]{D0E7F7}    57.38        & \cellcolor[HTML]{D0E7F7}     65.00    \\ 
                    & \cellcolor[HTML]{EAF3FA}      MoSLoRA          & \cellcolor[HTML]{EAF3FA}    Predictor           & \cellcolor[HTML]{EAF3FA}    4.470           & \cellcolor[HTML]{EAF3FA}    0.082         & \cellcolor[HTML]{EAF3FA}    3.280          & \cellcolor[HTML]{EAF3FA}    4.890               & \cellcolor[HTML]{EAF3FA}    0.894   & \cellcolor[HTML]{EAF3FA}    2.720  \\ 
                    & \cellcolor[HTML]{EAF3FA}                    & \cellcolor[HTML]{EAF3FA}    Generator           & \cellcolor[HTML]{EAF3FA}    74.28           & \cellcolor[HTML]{EAF3FA}    41.48          & \cellcolor[HTML]{EAF3FA}    70.34          & \cellcolor[HTML]{EAF3FA}    71.89               & \cellcolor[HTML]{EAF3FA}    58.06    & \cellcolor[HTML]{EAF3FA}     63.21        \\ 
                    & \cellcolor[HTML]{D0E7F7}                    & \cellcolor[HTML]{D0E7F7}    PredGen            & \cellcolor[HTML]{D0E7F7}    75.45           & \cellcolor[HTML]{D0E7F7}    43.68         & \cellcolor[HTML]{D0E7F7}    72.68          & \cellcolor[HTML]{D0E7F7}    71.11               & \cellcolor[HTML]{D0E7F7}    57.92   & \cellcolor[HTML]{D0E7F7}    64.17  \\ 
                    & \cellcolor[HTML]{EAF3FA}     Propulsion              & \cellcolor[HTML]{EAF3FA}    Predictor           & \cellcolor[HTML]{EAF3FA}    5.840           & \cellcolor[HTML]{EAF3FA}    0.103  & \cellcolor[HTML]{EAF3FA}    2.790          & \cellcolor[HTML]{EAF3FA}     88.21               & \cellcolor[HTML]{EAF3FA}    0.837       & \cellcolor[HTML]{EAF3FA}     19.56 \\ 
                    & \cellcolor[HTML]{EAF3FA}                     & \cellcolor[HTML]{EAF3FA}    Generator           & \cellcolor[HTML]{EAF3FA}    74.52          & \cellcolor[HTML]{EAF3FA}    41.78          & \cellcolor[HTML]{EAF3FA}    72.07          & \cellcolor[HTML]{EAF3FA}    72.18               & \cellcolor[HTML]{EAF3FA}    56.39      & \cellcolor[HTML]{EAF3FA}      63.39    \\ 
                    & \cellcolor[HTML]{D0E7F7}                  & \cellcolor[HTML]{D0E7F7}    PredGen            & \cellcolor[HTML]{D0E7F7}    75.83           & \cellcolor[HTML]{D0E7F7}    43.22          & \cellcolor[HTML]{D0E7F7}    74.38          & \cellcolor[HTML]{D0E7F7}    72.90               & \cellcolor[HTML]{D0E7F7}    57.72    & \cellcolor[HTML]{D0E7F7}     64.81 \\ \hline

\multirow{12}{*}{\textbf{Gemma2-9B}} 
                    & \cellcolor[HTML]{ECEEFF}  MELoRA             & \cellcolor[HTML]{ECEEFF}    Predictor           & \cellcolor[HTML]{ECEEFF}    4.790           & \cellcolor[HTML]{ECEEFF}    1.064          & \cellcolor[HTML]{ECEEFF}    2.960          & \cellcolor[HTML]{ECEEFF}    2.850              & \cellcolor[HTML]{ECEEFF}    1.157  & \cellcolor[HTML]{ECEEFF}    2.560\\ 
                    & \cellcolor[HTML]{ECEEFF}             & \cellcolor[HTML]{ECEEFF}    Generator           & \cellcolor[HTML]{ECEEFF}    76.75           & \cellcolor[HTML]{ECEEFF}    40.96          & \cellcolor[HTML]{ECEEFF}    73.78          & \cellcolor[HTML]{ECEEFF}   72.49              & \cellcolor[HTML]{ECEEFF}    57.68    & \cellcolor[HTML]{ECEEFF}    64.33   \\ 
                    & \cellcolor[HTML]{D8DBF5}                    & \cellcolor[HTML]{D8DBF5}    PredGen            & \cellcolor[HTML]{D8DBF5}    78.32           & \cellcolor[HTML]{D8DBF5}    43.63          & \cellcolor[HTML]{D8DBF5}    74.65          & \cellcolor[HTML]{D8DBF5}    74.49               & \cellcolor[HTML]{D8DBF5}    59.32     & \cellcolor[HTML]{D8DBF5}    66.08 \\ 
                    & \cellcolor[HTML]{ECEEFF}    LoRA-FA           & \cellcolor[HTML]{ECEEFF}    Predictor           & \cellcolor[HTML]{ECEEFF}    3.640           & \cellcolor[HTML]{ECEEFF}    0.945           & \cellcolor[HTML]{ECEEFF}    3.230          & \cellcolor[HTML]{ECEEFF}    2.740               & \cellcolor[HTML]{ECEEFF}    1.570    & \cellcolor[HTML]{ECEEFF}      2.420  \\ 
                    & \cellcolor[HTML]{ECEEFF}                    & \cellcolor[HTML]{ECEEFF}    Generator           & \cellcolor[HTML]{ECEEFF}    74.79           & \cellcolor[HTML]{ECEEFF}    42.56  & \cellcolor[HTML]{ECEEFF}    74.24          & \cellcolor[HTML]{ECEEFF}    73.81              & \cellcolor[HTML]{ECEEFF}    57.93      & \cellcolor[HTML]{ECEEFF}     64.67  \\ 
                    & \cellcolor[HTML]{D8DBF5}                     & \cellcolor[HTML]{D8DBF5}    PredGen            & \cellcolor[HTML]{D8DBF5}    77.27          & \cellcolor[HTML]{D8DBF5}    43.86          & \cellcolor[HTML]{D8DBF5}    75.93          & \cellcolor[HTML]{D8DBF5}    74.63               & \cellcolor[HTML]{D8DBF5}    59.11     & \cellcolor[HTML]{D8DBF5}    66.16    \\ 
                    & \cellcolor[HTML]{ECEEFF}     MoSLoRA            & \cellcolor[HTML]{ECEEFF}    Predictor           & \cellcolor[HTML]{ECEEFF}    3.470           & \cellcolor[HTML]{ECEEFF}    0.976          & \cellcolor[HTML]{ECEEFF}    2.860          & \cellcolor[HTML]{ECEEFF}    2.430              & \cellcolor[HTML]{ECEEFF}    1.670     & \cellcolor[HTML]{ECEEFF}    2.280  \\ 
                    & \cellcolor[HTML]{ECEEFF}                    & \cellcolor[HTML]{ECEEFF}    Generator           & \cellcolor[HTML]{ECEEFF}    76.23   & \cellcolor[HTML]{ECEEFF}    43.73          & \cellcolor[HTML]{ECEEFF}    73.98          & \cellcolor[HTML]{ECEEFF}    72.68               & \cellcolor[HTML]{ECEEFF}    56.84      & \cellcolor[HTML]{ECEEFF}     64.69  \\ 
                    & \cellcolor[HTML]{D8DBF5}                       & \cellcolor[HTML]{D8DBF5}    PredGen            & \cellcolor[HTML]{D8DBF5}    77.36           & \cellcolor[HTML]{D8DBF5}    46.11          & \cellcolor[HTML]{D8DBF5}    75.85    & \cellcolor[HTML]{D8DBF5}    72.82               & \cellcolor[HTML]{D8DBF5}     58.77  & \cellcolor[HTML]{D8DBF5}    66.18   \\ 
                    & \cellcolor[HTML]{ECEEFF}    Propulsion              & \cellcolor[HTML]{ECEEFF}    Predictor           & \cellcolor[HTML]{ECEEFF}    2.750           & \cellcolor[HTML]{ECEEFF}    1.190   & \cellcolor[HTML]{ECEEFF}    2.660          & \cellcolor[HTML]{ECEEFF}    2.590     & \cellcolor[HTML]{ECEEFF}    1.280  & \cellcolor[HTML]{ECEEFF}    2.09         \\ 
                    & \cellcolor[HTML]{ECEEFF}                       & \cellcolor[HTML]{ECEEFF}    Generator           & \cellcolor[HTML]{ECEEFF}    75.23          & \cellcolor[HTML]{ECEEFF}    43.61          & \cellcolor[HTML]{ECEEFF}    73.72          & \cellcolor[HTML]{ECEEFF}    73.18               & \cellcolor[HTML]{ECEEFF}    57.52  & \cellcolor[HTML]{ECEEFF}    64.65\\ 
                    & \cellcolor[HTML]{D8DBF5}             & \cellcolor[HTML]{D8DBF5}    PredGen            & \cellcolor[HTML]{D8DBF5}    78.18           & \cellcolor[HTML]{D8DBF5}    44.83          & \cellcolor[HTML]{D8DBF5}    75.92          & \cellcolor[HTML]{D8DBF5}    72.87    & \cellcolor[HTML]{D8DBF5}    58.20   & \cellcolor[HTML]{D8DBF5}    66.00        \\ \hline
\multirow{12}{*}{\textbf{DeepSeek-R1-8B}} 
                    & \cellcolor[HTML]{F8EDEB}    MELoRA              & \cellcolor[HTML]{F8EDEB}    Predictor           & \cellcolor[HTML]{F8EDEB}    6.276          & \cellcolor[HTML]{F8EDEB}    1.785          & \cellcolor[HTML]{F8EDEB}    8.099    & \cellcolor[HTML]{F8EDEB}    5.374              & \cellcolor[HTML]{F8EDEB}    2.407   & \cellcolor[HTML]{F8EDEB}     4.791   \\ 
                    & \cellcolor[HTML]{F8EDEB}                       & \cellcolor[HTML]{F8EDEB}    Generator           & \cellcolor[HTML]{F8EDEB}    78.80          & \cellcolor[HTML]{F8EDEB}    45.25          & \cellcolor[HTML]{F8EDEB}    73.99          & \cellcolor[HTML]{F8EDEB}    71.89               & \cellcolor[HTML]{F8EDEB}    58.70      & \cellcolor[HTML]{F8EDEB}     65.95        \\ 
                    & \cellcolor[HTML]{F3DBD7}                   & \cellcolor[HTML]{F3DBD7}    PredGen            & \cellcolor[HTML]{F3DBD7}    80.99           & \cellcolor[HTML]{F3DBD7}    47.40    & \cellcolor[HTML]{F3DBD7}    74.89          & \cellcolor[HTML]{F3DBD7}    73.13               & \cellcolor[HTML]{F3DBD7}    59.15     & \cellcolor[HTML]{F3DBD7}     66.82       \\ 
                    & \cellcolor[HTML]{F8EDEB}     LoRA-FA           & \cellcolor[HTML]{F8EDEB}    Predictor           & \cellcolor[HTML]{F8EDEB}    6.285           & \cellcolor[HTML]{F8EDEB}    2.764   & \cellcolor[HTML]{F8EDEB}    9.564          & \cellcolor[HTML]{F8EDEB}    6.644   & \cellcolor[HTML]{F8EDEB}    2.364   & \cellcolor[HTML]{F8EDEB}     5.529    \\ 
                    & \cellcolor[HTML]{F8EDEB}                    & \cellcolor[HTML]{F8EDEB}    Generator           & \cellcolor[HTML]{F8EDEB}    77.92          & \cellcolor[HTML]{F8EDEB}    44.83          & \cellcolor[HTML]{F8EDEB}    73.70          & \cellcolor[HTML]{F8EDEB}    72.00               & \cellcolor[HTML]{F8EDEB}    59.67     & \cellcolor[HTML]{F8EDEB}    65.46     \\ 
                    & \cellcolor[HTML]{F3DBD7}                     & \cellcolor[HTML]{F3DBD7}    PredGen            & \cellcolor[HTML]{F3DBD7}    78.68          & \cellcolor[HTML]{F3DBD7}    47.46          & \cellcolor[HTML]{F3DBD7}    73.82          & \cellcolor[HTML]{F3DBD7}    73.55               & \cellcolor[HTML]{F3DBD7}    60.12  & \cellcolor[HTML]{F3DBD7}    66.71 \\ 
                    & \cellcolor[HTML]{F8EDEB}     MoSLoRA            & \cellcolor[HTML]{F8EDEB}    Predictor           & \cellcolor[HTML]{F8EDEB}    5.490           & \cellcolor[HTML]{F8EDEB}    1.882           & \cellcolor[HTML]{F8EDEB}    11.29           & \cellcolor[HTML]{F8EDEB}    5.005               & \cellcolor[HTML]{F8EDEB}    2.727  & \cellcolor[HTML]{F8EDEB}   5.286 \\ 
                    & \cellcolor[HTML]{F8EDEB}                     & \cellcolor[HTML]{F8EDEB}    Generator           & \cellcolor[HTML]{F8EDEB}    78.75           & \cellcolor[HTML]{F8EDEB}    45.71          & \cellcolor[HTML]{F8EDEB}    73.91          & \cellcolor[HTML]{F8EDEB}    72.73               & \cellcolor[HTML]{F8EDEB}    60.24    & \cellcolor[HTML]{F8EDEB}     66.93   \\ 
                    & \cellcolor[HTML]{F3DBD7}                      & \cellcolor[HTML]{F3DBD7}    PredGen            & \cellcolor[HTML]{F3DBD7}    80.00           & \cellcolor[HTML]{F3DBD7}    46.94          & \cellcolor[HTML]{F3DBD7}    76.01         & \cellcolor[HTML]{F3DBD7}    73.42               & \cellcolor[HTML]{F3DBD7}    61.15     & \cellcolor[HTML]{F3DBD7}     67.65  \\ 
                    & \cellcolor[HTML]{F8EDEB}     Propulsion        & \cellcolor[HTML]{F8EDEB}    Predictor           & \cellcolor[HTML]{F8EDEB}    5.333           & \cellcolor[HTML]{F8EDEB}    2.260          & \cellcolor[HTML]{F8EDEB}    8.837         & \cellcolor[HTML]{F8EDEB}    4.743               & \cellcolor[HTML]{F8EDEB}    1.930    & \cellcolor[HTML]{F8EDEB}    5.024  \\ 
                    & \cellcolor[HTML]{F8EDEB}                   & \cellcolor[HTML]{F8EDEB}    Generator           & \cellcolor[HTML]{F8EDEB}    76.80           & \cellcolor[HTML]{F8EDEB}    43.85         & \cellcolor[HTML]{F8EDEB}    73.75          & \cellcolor[HTML]{F8EDEB}    71.52              & \cellcolor[HTML]{F8EDEB}    60.03    & \cellcolor[HTML]{F8EDEB}     65.71  \\ 
                    & \cellcolor[HTML]{F3DBD7}                  & \cellcolor[HTML]{F3DBD7}    PredGen            & \cellcolor[HTML]{F3DBD7}    79.05           & \cellcolor[HTML]{F3DBD7}    45.37         & \cellcolor[HTML]{F3DBD7}    76.61          & \cellcolor[HTML]{F3DBD7}    73.68               & \cellcolor[HTML]{F3DBD7}    60.95   & \cellcolor[HTML]{F3DBD7}     68.34  \\ \hline

\end{tabular}}
\caption{Performance of LLMs with Different PEFT Methods Across the Commonsense Benchmarks. Here we used DeepSeek-R1-Distill-Llama-8B.}\label{tab:math_performance}
\end{table*}

\section{PEFT Methods}

Parameter-Efficient Fine-Tuning (PEFT) methods have gained significant attention for their ability to adapt large pre-trained models with minimal computational overhead, In this work we have used a whole range of PEFT methods. A key aspect of these methods lies in their reparameterization of the delta weights ($\Delta \mathbf{W}$), which represent the updates to the base model weights. Table~\ref{tab:delta_reparameterization} illustrates the diverse strategies employed by various PEFT methods, such as LoRA \cite{hu2021lora}, AdaLoRA \cite{zhang2023adalora}, RoCoFT \cite{kowsher2024rocoft}, DoRA \cite{freud1997dora}, MELoRA \cite{ren2024melora}, LoRA-FA \cite{zhang2023lora}, MoSLoRA \cite{wu2024mixture}, and Propulsion \cite{kowsher2024propulsion}. For instance, LoRA employs low-rank decomposition with $\mathbf{W}_{\text{down}}$ and $\mathbf{W}_{\text{up}}$ matrices, while AdaLoRA leverages singular value decomposition (SVD) for adaptive rank updates. RoCoFT introduces restricted row or column updates, and Propulsion focuses on updating only a mask $\mathbf{Z}$ while freezing the base weights. These approaches highlight the trade-offs between efficiency, flexibility, and performance in fine-tuning large models.

\begin{table*}[htp!]
\centering
\scalebox{0.75}{
\begin{tabular}{l|l|p{10cm}}
\hline
\textbf{Method}        & \textbf{$\Delta W$ Reparameterization} & \textbf{Notes} \\ \hline
LoRA                   & $\Delta W = W_{\text{down}} W_{\text{up}}$  & $W_{\text{down}} \in \mathbb{R}^{d \times r}$, $W_{\text{up}} \in \mathbb{R}^{r \times d}$, and $r \ll \{k,d\}$. \\ \hline
AdaLoRA                & $\Delta W  = PAQ$ & $PP^\top = P^\top P \neq I = QQ^\top = Q^\top Q$, $\Lambda = \text{diag}(\sigma_1, \sigma_2, \dots, \sigma_r)$. \\ \hline
RoCoFT                 & $\Delta W = W_0 + R$ or $\Delta W = W_0 + C$ & $R$ and $C$ are restricted weight matrices such that only at most $r$ of the rows or columns are non-zero. \\ \hline
DoRA                   & $\Delta W = W_{\text{down}} W_{\text{up}}$ & Similar to LoRA but with dynamic rank adaptation during training. \\ \hline
MELoRA                 & $\Delta W = W_{\text{down}} W_{\text{up}}$ & Multi-expert LoRA, where multiple low-rank updates are combined. \\ \hline
LoRA-FA                & $\Delta W = W_{\text{down}} W_{\text{up}} = Q R W_{\text{up}}$ & $W_{\text{down}}$ is frozen, and only $W_{\text{up}}$ is updated. \\ \hline
MoSLoRA                & $\Delta W = W_{\text{down}} W_{\text{up}}$ & Mixture of sparse LoRA, combining sparse and low-rank updates. \\ \hline
Propulsion             & $\Delta W = W \odot Z$ & $W$ is frozen, and only $Z$ is updated. \\ \hline
\end{tabular}
}
\caption{Comparison of delta weight reparameterization across various PEFT methods.} \label{tab:delta_reparameterization}
\end{table*}

\section{Token Laval Mutual Information} \label{append:MI}

\subsection{MI for Classification}  
To further analyze mutual information (MI), we present token-label MI heatmaps in Figures~\ref{fig:MI_cls0} and \ref{fig:MI_cls1} using the PIQA dataset. In Figure~\ref{fig:MI_cls0}, the correct answer is "Solution 1," where the input text has higher mutual information with "Solution 1" than with "Solution 2." Similarly, in Figure~\ref{fig:MI_cls1}, the correct answer is "Solution 2," and the input shows stronger MI with "Solution 2" than with "Solution 1."  

\subsection{MI for Regression}  
In Figures~\ref{fig:MI_reg0}, \ref{fig:MI_reg3}, and \ref{fig:MI_reg2}, we present MI heatmaps showing the relationship between input tokens and regression values from the LCP dataset.

\begin{figure*}[!t]
    \centering
    \includegraphics[width=1.00\linewidth]{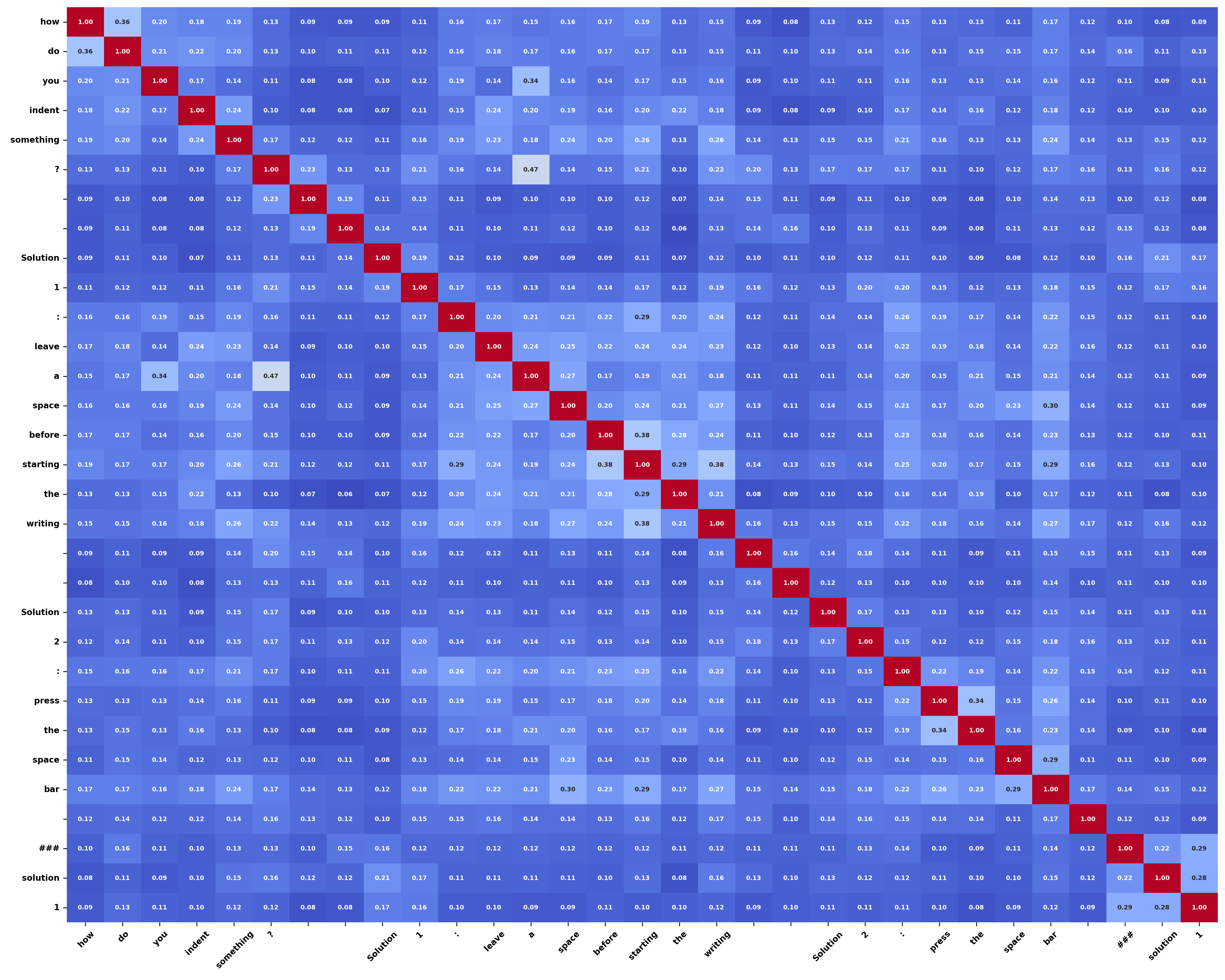}
\caption{Mutual information (MI) heatmap for classification on the PIQA dataset. The correct answer is "Solution 1," which has higher MI with the input text compared to "Solution 2," indicating stronger alignment with the correct label.}

\label{fig:MI_cls0}
\end{figure*}

\begin{figure*}[!t]
    \centering
    \includegraphics[width=1.00\linewidth]{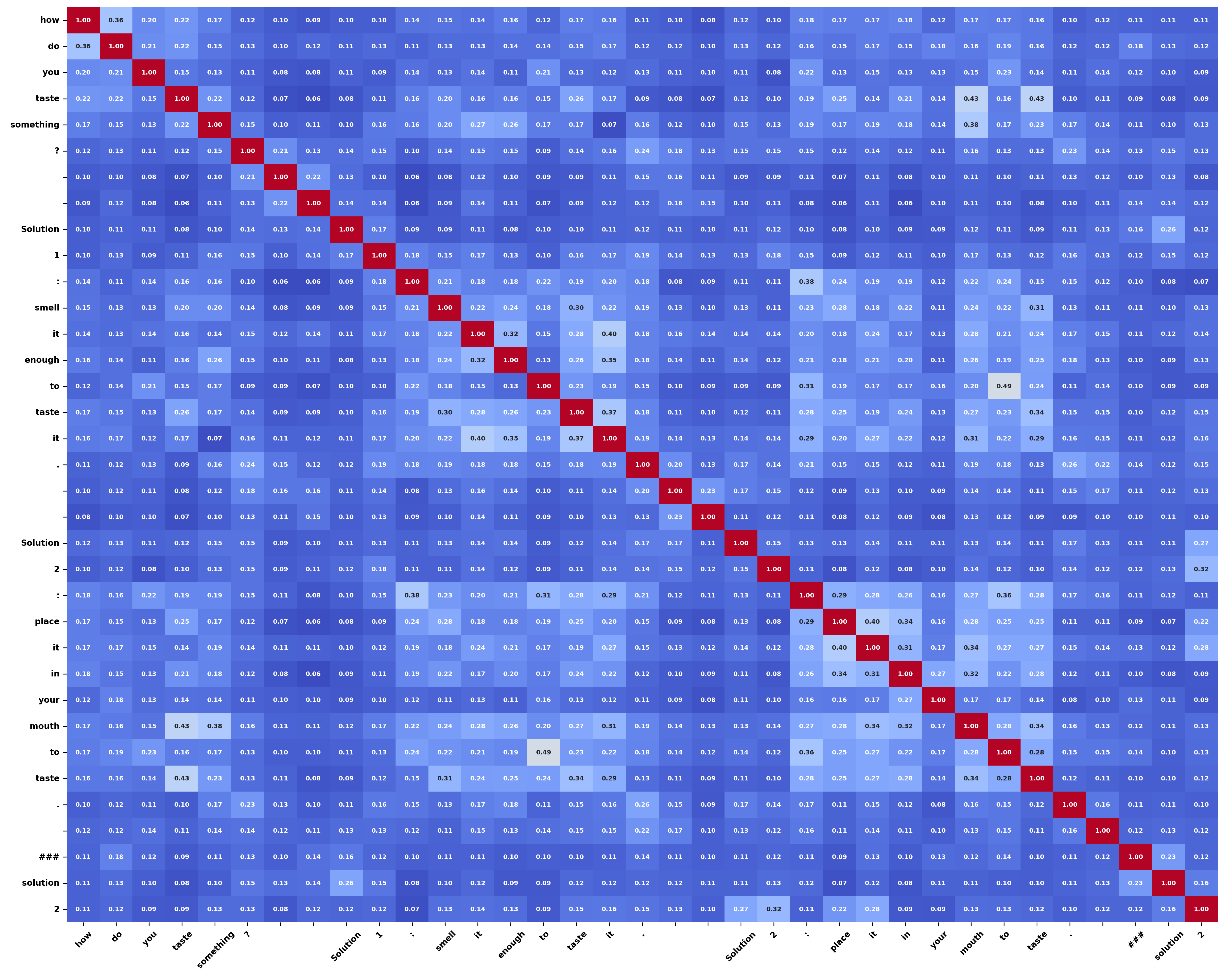}
\caption{Mutual information (MI) heatmap for classification on the PIQA dataset. The correct answer is "Solution 2," which has higher MI with the input text compared to "Solution 1," demonstrating effective label association.}

\label{fig:MI_cls1}
\end{figure*}

\begin{figure*}[!t]
    \centering
    \includegraphics[width=1.0\linewidth]{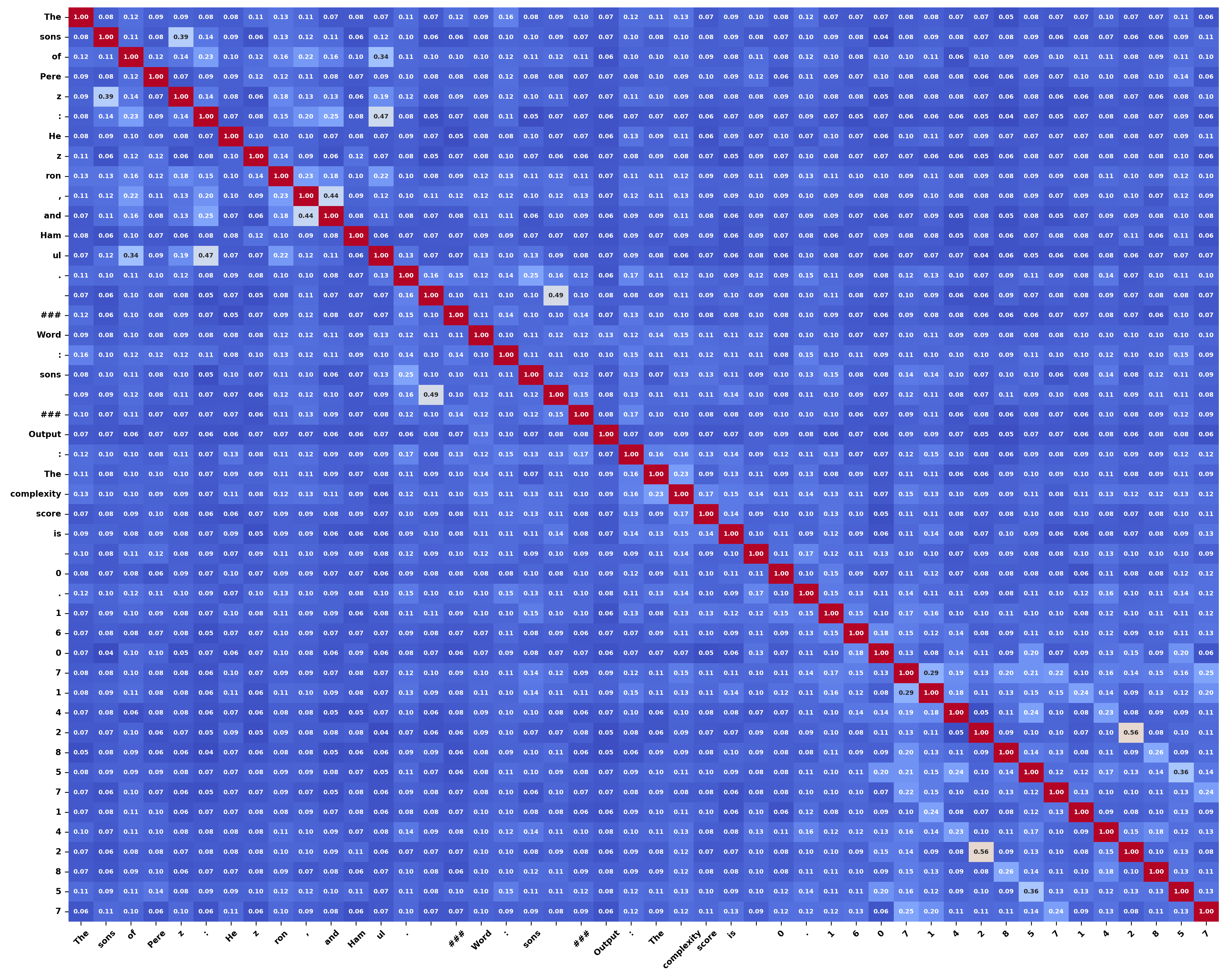}

\caption{Mutual information (MI) heatmap for regression on the LCP dataset. The visualization shows how input tokens contribute to the predicted regression values, highlighting key dependencies.}

\label{fig:MI_reg0}
\end{figure*}

\begin{figure*}[!t]
    \centering
    \includegraphics[width=1.0\linewidth]{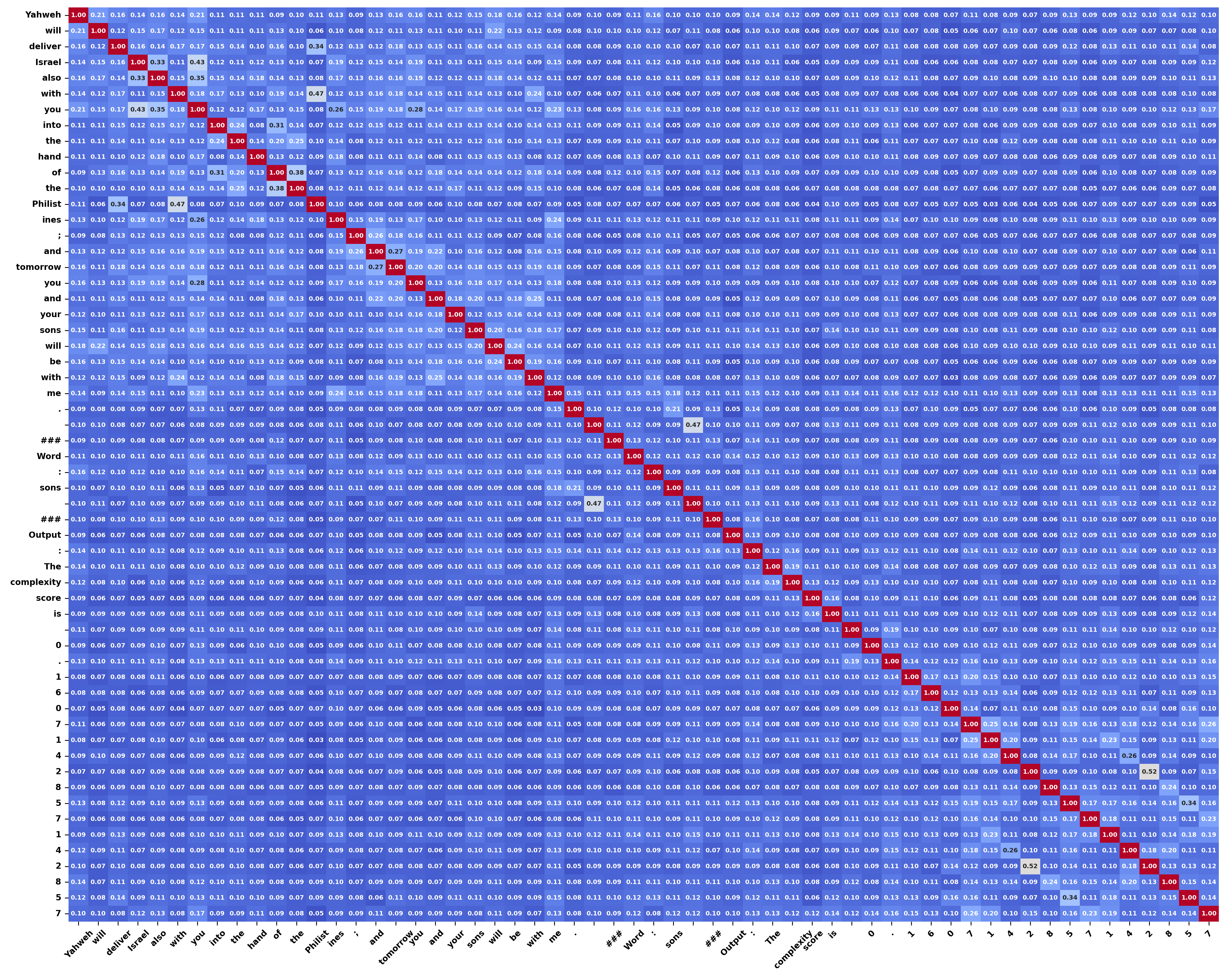}

\caption{Mutual information (MI) heatmap for regression on the LCP dataset. The visualization shows how input tokens contribute to the predicted regression values, highlighting key dependencies.}

\label{fig:MI_reg1}
\end{figure*}

\begin{figure*}[!t]
    \centering
    \includegraphics[width=1.0\linewidth]{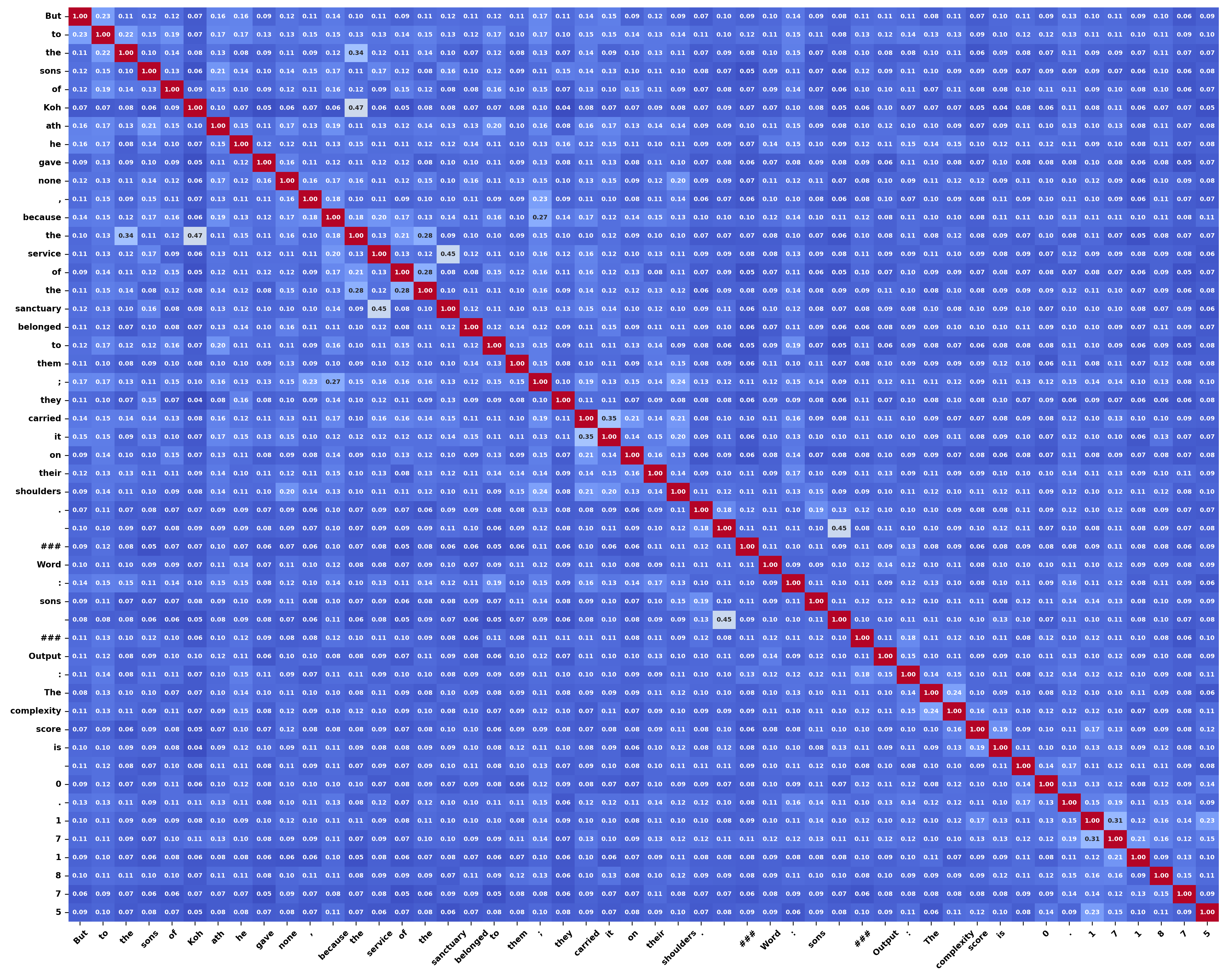}

\caption{Mutual information (MI) heatmap for regression on the LCP dataset. The model learns fine-grained relationships between input tokens and regression outputs, preserving important task-specific information.}

\label{fig:MI_reg2}
\end{figure*}

\begin{figure*}[!t]
    \centering
    \includegraphics[width=1.0\linewidth]{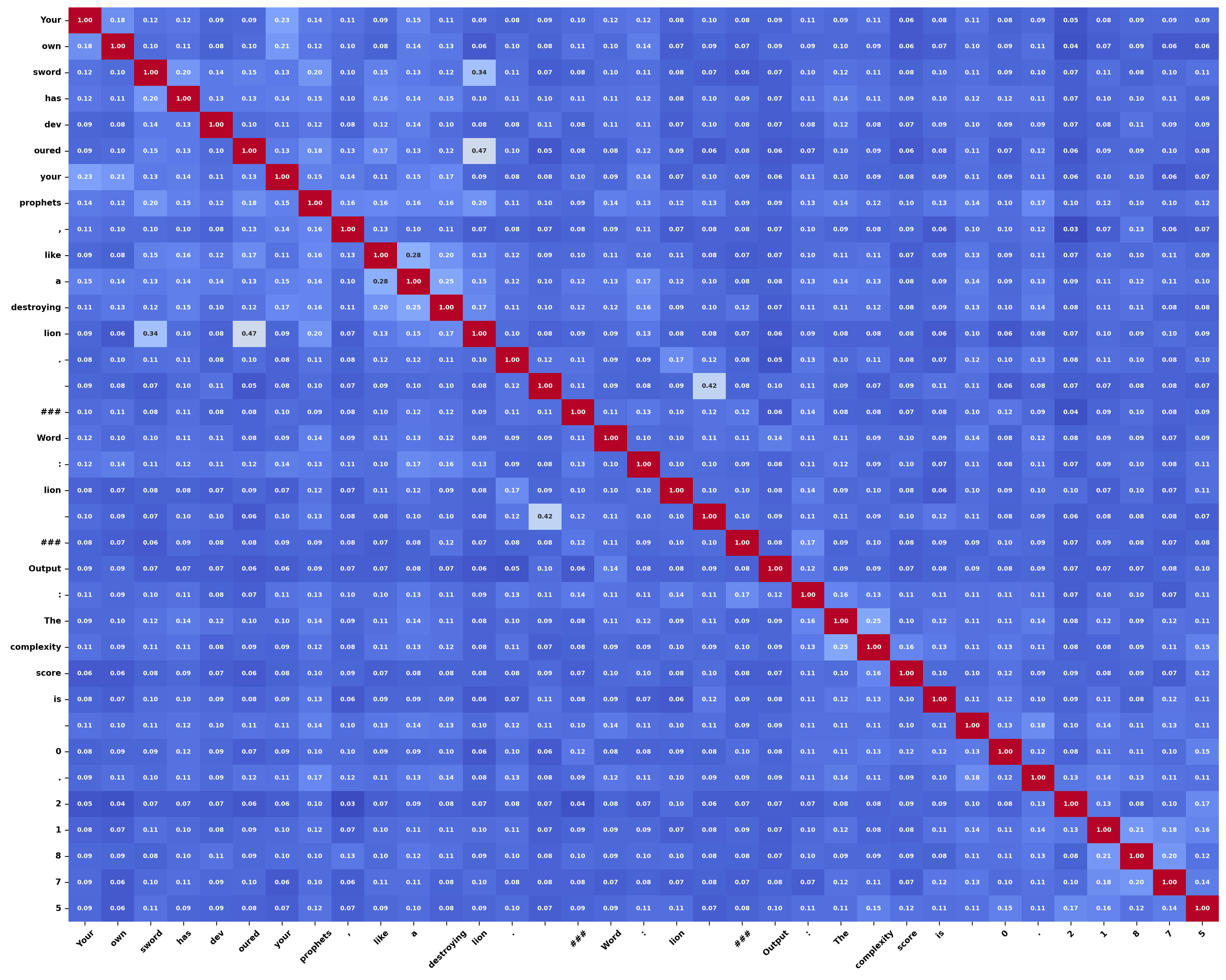}
\caption{Mutual information (MI) heatmap for regression on the LCP dataset. The results demonstrate that token-level generation retains more mutual information with regression values compared to pooled representations.}

\label{fig:MI_reg3}
\end{figure*}

\end{document}